\DeclareMathOperator{\spn}{span}
\DeclareMathOperator{\Ima}{Im}
\DeclareMathOperator{\diag}{diag}
\DeclareMathOperator{\trace}{trace}
\DeclareMathOperator{\rank}{rank}
\newcommand{\e}{{\rm e}}
\newcommand{\E}{{\mathbb E}}
\newcommand{\Pa}{{\mathbb P}}
\newcommand{\Q}{{\mathbb Q}}
\newcommand{\R}{{\mathbb R}}
\newcommand{\M}{{\mathbb M}}
\newcommand{\Ccal}{{\mathcal C}}
\newcommand{\Ecal}{{\mathcal E}}
\newcommand{\Hcal}{{\mathcal H}}
\newcommand{\Mcal}{{\mathcal M}}
\newcommand{\Ncal}{{\mathcal N}}
\newcommand{\Ocal}{{\mathcal O}}
\newcommand{\Scal}{{\mathcal S}}
\newcommand{\Ucal}{{\mathcal U}}
\newcommand{\Xcal}{{\mathcal X}}
\newcommand{\Ycal}{{\mathcal Y}}
\newcommand{\Zcal}{{\mathcal Z}}
\newtheorem{proposition}{Proposition}[section]
\newtheorem{lemma}[proposition]{Lemma}
\newtheorem{theorem}[proposition]{Theorem}
\newtheorem{remark}[proposition]{Remark}
\newcommand{\trueg}{g_{\star}}
\begin{document}

\title{Kernel Density Machines\thanks{We thank Giovanni Ballarin, Lyudmila Grigoryeva, Markus Pelger, Lorenzo Rosasco, Fabio Trojani, Johanna Ziegel, and participants of the 17th International Conference on Computational and Financial Econometrics (CFE 2023), SoFiE 2024 in Rio de Janeiro, workshop on  Numerical Methods for Finance and Insurance 2025 in Milano, FinEML Rotterdam 2025, Vienna Congress on Mathematical Finance 2025, Quantitative Methods and Learning Research Seminar at University of St.Gallen, Seminar on Insurance Mathematics and Stochastic Finance at ETH Zurich, for helpful comments. Paul Schneider gratefully acknowledges the Swiss National Science Foundation grant \emph{Large-scale kernel methods in financial economics}, grant number 215528.} }
\author{Andrea Della Vecchia\footnote{EPFL, Swiss Finance Institute. Email: andrea.dellavecchia@epfl.ch} \and Damir Filipovi\'c\footnote{EPFL, Swiss Finance Institute. Email: damir.filipovic@epfl.ch} \and Paul Schneider\footnote{Universit\`a della Svizzera italiana and Swiss Finance Institute. Email: paul.schneider@usi.ch}  }
% \date{16 December 2025}

% \author{\name Andrea Della Vecchia \email andrea.dellavecchia@epfl.ch \\
%        \addr EPFL \\
%        Lausanne, Switzerland
%        \AND
%        \name Damir Filipovi\'c \email damir.filipovic@epfl.ch \\
%        \addr EPFL\\
%        Lausanne, Switzerland
%        \AND
%        \name Paul Schneider \email paul.schneider@usi.ch \\
%        \addr Universit\`a della Svizzera italiana\\
%        Lugano, Switzerland}
% 
% \editor{My editor}

\maketitle

\begin{abstract}
% We introduce kernel density machines (KDM), an agnostic kernel-based framework for learning the Radon–Nikodym derivative (density) between probability measures under minimal assumptions. We construct a sample estimator and establish its consistency together with a functional central limit theorem. For scalability, we develop Nystr\"om-type low-rank approximations and derive optimal error rates, addressing a gap in the literature where such guarantees for density learning have not been available. We further demonstrate the applicability of KDM to kernel-based two-sample testing and conditional distribution estimation, with the latter enjoying dimension-free guarantees beyond those of classical locally smoothed methods. Empirical studies on simulated and real data illustrate the effectiveness of KDM in these tasks.

We introduce kernel density machines (KDM), an agnostic kernel-based framework for learning the Radon–Nikodym derivative (density) between probability measures under minimal assumptions. KDM applies to general measurable spaces and avoids the structural requirements common in classical nonparametric density estimators. We construct a sample estimator and prove its consistency and a functional central limit theorem. To enable scalability, we develop Nystr\"om-type low-rank approximations and derive optimal error rates, filling a gap in the literature where such guarantees for density learning have been missing. We demonstrate the versatility of KDM through applications to kernel-based two-sample testing and conditional distribution estimation, the latter enjoying dimension-free guarantees beyond those of locally smoothed methods. Experiments on simulated and real data show that KDM is accurate, scalable, and competitive across a range of tasks.

\end{abstract}

\vspace{2ex}

\textbf{Keywords:} kernel methods; density learning; %Radon--Nikodym derivative; 
Nystr\"om approximation; 
%optimal error rates; 
two-sample testing; conditional distribution estimation.

% \vspace{2ex}

% \noindent\textbf{JEL classification:}

\section{Introduction}\label{sec:intro}

Estimating the Radon--Nikodym derivative 
\( g_{\star} \coloneqq \frac{d\mathbb{Q}}{d\mathbb{P}} \)
of a probability measure $\mathbb{Q}$ with respect to $\mathbb{P}$
is central to many tasks in machine learning, statistics, and the applied sciences. 
Examples include conditional distribution estimation \citep{rac_08, cmereview17}, 
two-sample hypothesis testing \citep{gre_etal_12}, 
domain adaptation and transfer learning \citep{transfer10}, covariate shift \citep{shimodaira2000improving}, 
anomaly detection \citep{nearestneighbor67, hido2011statistical}, 
causal inference \citep{Imbens_Rubin_2015}, 
and generative modeling \citep{mohamed2016learning}.
We refer to $g_\star$ as the density of $\Q$ relative to $\Pa$, 
following standard usage in probability theory, see~\citet[Chapter~III.3]{jac_shi_87}.

Most existing approaches assume that $\Pa$ and $\Q$ admit Lebesgue densities, 
in which case $g_\star$ is typically called the density ratio.
This restricts applicability to purely continuous distributions and excludes many settings of practical relevance.
In contrast, we introduce a kernel-based method that applies to general probability measures on countably generated measurable spaces, 
without requiring Lebesgue densities: \emph{kernel density machines} (KDM).
This parsimony of assumptions substantially broadens the scope of density learning.

We model $g_\star$ as the sum of an exogenous prior and a function in a reproducing kernel Hilbert space (RKHS).
While RKHS methods provide strong theoretical guarantees, they are computationally demanding in large-sample or high-dimensional settings.
To ensure scalability, we incorporate Nystr\"om-type low-rank approximations that preserve essential information while reducing computational cost.
This balance of flexibility and efficiency enables KDM to handle a wide range of real-world problems, 
from structured financial modeling to complex conditional distributions in scientific applications.

We provide rigorous theoretical guarantees for KDM, including consistency of the sample estimator, a functional central limit theorem, 
and optimal error rates for the low-rank approximation. While such results are well established for classical regression and classification problems \citep{rud_cam_ros_15,della2024nystrom}, we show for the first time that, even in the context of density learning, Nyström-compressed estimators achieve the same optimal statistical guarantees as the full model, while delivering substantial gains in computational efficiency and memory usage.
As applications, we recover and extend the kernel two-sample test of \citet{gre_etal_12} 
and develop a novel, consistent estimator of conditional distributions.
We further illustrate the practical performance of KDM through experiments on simulated and real data, 
including independence tests, conditional distribution estimation, and joint distribution estimation on a discrete state space.\footnote{
A reference implementation is available at 
\href{https://drive.google.com/file/d/1r3CteWtL3g3Kj7f6YO2HnbSP4mt4DoK3/view?usp=sharing}{KDM.R}.}
The empirical results suggest that KDM is particularly effective in larger-sample and higher-dimensional settings, 
where local nonparametric kernel estimators \citep{liracine06} often struggle.

\paragraph{Related work.}
Our approach builds on recent advances in kernel-based density estimation but extends beyond existing methods by applying to general probability measures.
\citet{sugiyama2012density, sugiyama2008least} and \citet{filipovic2024adaptive} 
either restrict attention to ratios of Lebesgue densities or do not provide the theoretical guarantees and low-rank approximation error rates developed here.
\citet{JMLR:v25:23-0567, JMLR:v24:23-1004} 
derive optimal convergence rates under strong source conditions on compact subsets of $\R^d$,
while our agnostic framework accommodates general probability measures without requiring compact support or a dominating reference measure.
A separate literature on conditional mean embeddings \citep{song2008hilbert, klebanov2021rigorous, NEURIPS2020_f340f1b1} 
focuses on conditional expectations and does not provide a general mechanism for learning densities.

To the best of our knowledge, the kernel-based density estimation literature does not provide 
Nystr\"om-type low-rank approximations with statistical guarantees, 
which are essential for large-scale applications.
We fill this gap by extending the seminal results of \citet{rud_cam_ros_15} from regression to density learning 
and proving optimal error rates in this setting.
This appears to be the first instance in which optimal error rates in the sense of \citet{rud_cam_ros_15} 
are established for kernel-based density estimation, forming the main theoretical contribution of this paper.

\paragraph{Contributions.}
The main contributions of this paper are as follows.
(i) We develop KDM, an agnostic framework for kernel-based density estimation that applies to general probability measures under minimal assumptions.
(ii) We propose a sample estimator and establish its consistency together with a functional central limit theorem.
(iii) We derive optimal error rates for density estimation under Nystr\"om-type low-rank approximations, 
explicitly addressing a gap in the literature where such guarantees have not been available.
(iv) We illustrate the applicability of KDM by constructing kernel-based two-sample tests, 
including the kernel two-sample test of \citet{gre_etal_12} as a special case.
(v) We show that KDM yields a kernel-based estimator of conditional distributions, 
overcoming classical limitations caused by the curse of dimensionality in locally smoothed estimators such as \citet{liracine06}.  
Our guarantees are dimension-free and apply to general state spaces.

\paragraph{Organization.}
The remainder of the paper is organized as follows. 
Section~\ref{sec_problem} introduces the general framework of KDM for density learning. 
Section~\ref{sec:sampleestimator} presents the sample estimator and establishes its consistency together with a functional central limit theorem. It also provides a representer theorem. Section~\ref{sec:lowrank} develops Nystr\"om approximations and derives optimal error rates. 
Section~\ref{sec:applications} describes applications of KDM to kernel-based two-sample testing and conditional distribution estimation. 
Section~\ref{sec:experiments} provides empirical evaluations on simulated and real data. 
Section~\ref{sec:conclusion} concludes. 
The appendix contains additional results, technical lemmas, and all proofs.

\section{Kernel density machines: General framework}\label{sec_problem}

We first introduce the formal setup of kernel density machines (KDM), used as standing assumptions used throughout the paper. Let $\Pa$ and $\Q$ be probability measures on some countably generated\footnote{Assuming a countably generated $\sigma$-algebra implies that the $L^2$-spaces studied below are separable. See, e.g., \citet[Theorem 4.13]{bre_11}.} measurable space $\Zcal$, where $\Q$ is absolutely continuous with respect to $\Pa$,
\begin{equation}\label{assu1}
    \Q \ll \Pa.
\end{equation}
We denote the density by $g_{\star}\coloneqq \frac{d\Q}{d\Pa}$.

The goal of this paper is to learn the true density $g_{\star}$ under minimal assumptions from samples of $\Pa$ and $\Q$. To this end,  we assume that
\begin{equation}\label{assuginL2}
 g_{\star} \in L^2_{\Pa},
\end{equation}
where we write $L^2_\M = L^2(\Zcal,\M)$ for the separable $L^2$-space of a.s.-equivalence classes of real-valued functions on $\Zcal$, and where $\M$ denotes a placeholder for either $\Pa$ or $\Q$.

We let $\Hcal$ be a separable reproducing kernel Hilbert space (RKHS) with bounded measurable reproducing kernel $k$ on $\Zcal$,
\begin{equation}\label{asskappa}
  \kappa  \coloneqq \sup_{z\in\Zcal} k(z,z)^{1/2}<\infty,
\end{equation}
see, e.g., \citet{ste_sco_12, pau_rag_16} for the definition and properties of an RKHS.\footnote{By the kernel property, Assumption \eqref{asskappa} implies that $|k(z_1,z_2)|  \le \sqrt{k(z_1,z_1) k(z_2,z_2)}$, hence $ \sup_{z_1,z_2\in\Zcal} |k(z_1,z_2)|=\sup_{z\in\Zcal}k(z,z)$.}
Hence all functions $h\in\Hcal$ are measurable and bounded, and the canonical embeddings $J_\M    \colon \Hcal\to L^2_{\M}$ that map $h$ onto its respective a.s.-equivalence class $J_\M h$, are Hilbert--Schmidt operators with adjoints
\begin{align*}
       J_\M^\ast f   = \int_{\Zcal}
  k(\cdot,z)f(z)\, \M(dz) ,  \quad  f\in L^2_{\M},
\end{align*}
see, e.g., \citet[Section 2]{ste_sco_12}. 

We consider candidate densities of the form
\begin{equation}\label{eq:ansatz}
g = p + J_\Pa h, \quad h\in\Hcal,
\end{equation}
where $p:\Zcal\to\R$ is a bounded measurable prior function, 
\begin{equation}\label{asspi}
\|p\|_\infty \coloneqq \sup_{z\in\Zcal} |p(z)| < \infty.
\end{equation}
The additive decomposition \eqref{eq:ansatz} is motivated by the fact that $g_\star$, even if bounded, need not belong to $\Hcal$, whereas $g_\star - p$ may. To illustrate this, consider the case $\Zcal=\R$, $g_\star=1$, and $k$ a Gaussian kernel. The associated RKHS does not contain constant functions (indeed, it contains no polynomials; see \citet{quang10}), so choosing the constant prior $p=1$ makes the residual representable in $\Hcal$. In general, we neither assume that $p$ is positive nor that it integrates to one with respect to $\Pa$, so that $p=0$ is always admissible. Notably, the specification \eqref{eq:ansatz} appears to be new in the literature and accommodates a broader class of densities than the structural assumptions typically imposed in prior work. For a detailed comparison, see Section~\ref{ssec_comp}.

\medskip
\noindent\textbf{Standing Assumption.}
\emph{The above standing assumptions, in particular \eqref{assu1}--\eqref{asspi}, apply throughout the paper, even when not explicitly restated.}
\medskip

It remains to introduce a suitable error function $\Ecal(h)$, measuring the distance between the true $ \Q=g_\star  \Pa$ and the candidate measure $({p}+J_\Pa h) \Pa$, that can be empirically evaluated for any $h\in\Hcal$. To this end we consider the squared worst-case expectation error over all normalized test functions \(f\in L^2_{\Pa}\), 
\begin{equation}\label{eqPML2}
\begin{aligned}
  \Ecal(h)&\coloneqq \sup_{\|f\|_{L^2_{\Pa}}\le 1}
 \Big(\int_{\Zcal} f(z)\,\Q(d z)
 - \int_{\Zcal} f(z)({p}(z)+  h(z)) \, \Pa(d z)\Big)^2 \\
 &  = \sup_{\|f\|_{L^2_{\Pa}}\le 1}
  \langle f, g_{\star} - {p}- J_\Pa h\rangle_{L^2_{\Pa}}^2
 = \|  g_{\star} - {p} - J_\Pa h\|_{L^2_{\Pa}}^2.
\end{aligned}
 \end{equation}

Adding Tikhonov regularization through the penalty term $\lambda \|h\|_{\Hcal}^2$ for some $\lambda> 0$, we arrive at the convex problem
\begin{equation}\label{optUCpre}
  \underset{h\in\Hcal}{\text{minimize}} \,  \big\{ \Ecal(h)  +  \lambda \|h\|_\Hcal^2\big\}.
\end{equation}
\begin{equation}\label{hlambdaeqpre}
 h_\lambda = ( J_\Pa^\ast J_\Pa + \lambda)^{-1} J_\Pa^\ast( g_\star-  {p} ) .
\end{equation}
This standard result follows from, e.g., \citet[Theorem 5.1]{eng_et_al_96}. In contrast, a solution to \eqref{optUCpre} does not always exist for the limit case $\lambda=0$, as the following remark clarifies.

\begin{remark}
 The operator $J_\Pa^\ast J_\Pa$ on $\Hcal$ is trace-class. It is invertible if and only if $\ker J_\Pa=\{0\}$ and $\dim \Hcal<\infty$. In general, the right hand side of~\eqref{hlambdaeqpre} is thus not well defined for $\lambda=0$.
\end{remark}

The error function $\Ecal(h)$, and the expression in \eqref{hlambdaeqpre}, contain the unobservable population density $g_{\star}$ and therefore are not readily available for estimations based on samples of $\Pa$ and $\Q$. We address this problem by exploiting the identity $J_\Pa^\ast g_\star = J_\Q^\ast 1$, justified by the following lemma, whose proof is provided in Appendix~\ref{app:proof_lemma2}.

\begin{lemma}\label{lemJast0Jast}
We have $J_\Pa^\ast (f\trueg) = J_\Q^\ast f $ for any bounded measurable function $f$ on $\Zcal$.
\end{lemma}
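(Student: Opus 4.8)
The plan is to verify the operator identity $J_\Pa^\ast(f\trueg) = J_\Q^\ast f$ by testing both sides against an arbitrary $h\in\Hcal$ and reducing to the defining change-of-measure property of the Radon--Nikodym derivative $\trueg = \frac{d\Q}{d\Pa}$. First I would note that both sides are well-defined elements of $\Hcal$: since $f$ is bounded and measurable and $\trueg\in L^2_\Pa$ by \eqref{assuginL2}, the product $f\trueg$ lies in $L^2_\Pa$, so $J_\Pa^\ast(f\trueg)$ makes sense; likewise $f\in L^2_\Q$ because $\Q$ is a finite measure and $f$ is bounded, so $J_\Q^\ast f$ makes sense. It therefore suffices to show $\langle J_\Pa^\ast(f\trueg), h\rangle_\Hcal = \langle J_\Q^\ast f, h\rangle_\Hcal$ for all $h\in\Hcal$.

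Next I would unfold each inner product using the adjoint relations $\langle J_\M^\ast u, h\rangle_\Hcal = \langle u, J_\M h\rangle_{L^2_\M}$. The left side becomes $\langle f\trueg, J_\Pa h\rangle_{L^2_\Pa} = \int_\Zcal f(z)\trueg(z)\,h(z)\,\Pa(dz)$, where I use that $J_\Pa h$ is the $\Pa$-a.s.\ class of the bounded measurable function $h$. The right side becomes $\langle f, J_\Q h\rangle_{L^2_\Q} = \int_\Zcal f(z)\,h(z)\,\Q(dz)$. Now the identity $\int_\Zcal f h\,\trueg\,d\Pa = \int_\Zcal f h\,d\Q$ is exactly the Radon--Nikodym change-of-variables formula applied to the bounded measurable integrand $f h$ (bounded because both $f$ and $h$ are bounded, measurable because both are measurable), which holds by definition of $\trueg = \frac{d\Q}{d\Pa}$. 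This establishes equality of the two inner products for every $h$, hence equality of the two elements of $\Hcal$.

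There is essentially no hard obstacle here; the only point requiring a moment's care is the well-definedness of $J_\Pa^\ast(f\trueg)$, i.e.\ checking $f\trueg\in L^2_\Pa$ — this follows from $\|f\trueg\|_{L^2_\Pa}\le \|f\|_\infty\|\trueg\|_{L^2_\Pa}<\infty$ using \eqref{assuginL2} — and making sure the integral manipulations are legitimate, which they are since all integrands are bounded and $\Pa,\Q$ are finite. One should also remark that the equality $J_\Pa^\ast g_\star = J_\Q^\ast 1$ quoted just before the lemma is the special case $f\equiv 1$.
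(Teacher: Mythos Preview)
Your proof is correct and essentially the same as the paper's: both note that $f\trueg\in L^2_\Pa$ and $f\in L^2_\Q$, then reduce the identity to the Radon--Nikodym change-of-measure formula. The only cosmetic difference is that the paper invokes the explicit integral representation $J_\M^\ast u=\int_\Zcal k(\cdot,z)u(z)\,\M(dz)$ directly, whereas you test against arbitrary $h\in\Hcal$ via the adjoint relation --- these are equivalent one-step arguments.
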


We infer that $\Ecal(h)=\|  g_{\star} - {p} \|_{L^2_{\Pa}}^2 - 2\langle J_\Q^\ast 1 - J_\Pa^\ast {p},h\rangle_\Hcal + \langle J_\Pa^\ast J_\Pa h,h\rangle_\Hcal $, and therefore problem \eqref{optUCpre} is equivalent to the bona fide convex problem
\begin{equation}\label{optUCNS}
\underset{h\in\Hcal}{\text{minimize}} \, \big\{ - 2 \langle J_\Q^\ast 1 - J_\Pa^\ast {p}, h\rangle_\Hcal + \langle (J_\Pa^\ast J_\Pa +\lambda) h, h\rangle_\Hcal  \big\},
\end{equation}
and its solution \eqref{hlambdaeqpre} can be represented as
\begin{equation}\label{hlambdaeqNS}
 h_\lambda = ( J_\Pa^\ast J_\Pa + \lambda)^{-1} (J_\Q^\ast 1- J_\Pa^\ast {p} ) .
\end{equation}

To assess how close ${p}+ J_\Pa h_\lambda$ is  to the true density $g_\star$,  we consider the orthogonal decomposition of the population error
 \begin{equation}\label{SQAE}
   \Ecal(h_\lambda)=\|   g_\star - {p}-J_\Pa h_\lambda \|_{L^2_{\Pa}}^2 =   \underbrace{\| g_\star - g_0  \|_{L^2_{\Pa}}^2}_{\text{projection error}} + \underbrace{\| g_0 -{p}- J_\Pa h_\lambda  \|_{L^2_{\Pa}}^2}_{\text{regularization error}} ,
  \end{equation}
into the sum of the squared projection error and squared regularization error, where $g_0-{p}$ denotes the orthogonal projection of $g_{\star}-{p}$ onto the closure $\overline{\Ima J_\Pa}$ of the image of $J_\Pa$ in $L^2_{\Pa}$. The following lemma collects some elementary facts about these error terms, the proof is provided in Appendix~\ref{app:proof_lemma3}.

\begin{lemma}\label{lemregerror0} 
\begin{enumerate}

\item\label{lemregerror00} The projection error vanishes, $\|   g_\star - g_0 \|_{L^2_{\Pa}}=0$, if and only if the ground truth $g_\star - {p}$ lies in $\overline{\Ima J_\Pa}$. This holds in particular if the kernel $k$ is universal in the sense that $\overline{\Ima J_\Pa}=L^2_{\Pa}$.

    \item\label{lemregerror01} The regularization error vanishes as $\lambda$ tends to zero, $\lim_{\lambda\to 0} \| g_0-{p}-J_\Pa h_\lambda \|_{L^2_{\Pa}} = 0$, albeit the convergence may be slow in general.

\item\label{lemregerror02} Well-posedness: Problem \eqref{optUCpre} admits a solution for $\lambda=0$ if and only if the projection 
\begin{equation}\label{asswellposed}
\text{$g_0 - {p} = J_\Pa h_0$ is attained for some $h_0\in \Hcal$.}
\end{equation}

\item\label{lemregerror03} Source condition: In addition to \eqref{asswellposed} assume that  
\begin{equation}\label{sourcecond}
\text{$h_0 \in \Ima (J_\Pa^\ast J_\Pa)^{{\nu}}$ for some ${{\nu}}\in [0,1/2]$. }
\end{equation}
Then 
\begin{align}
     \| (J_\Pa^\ast J_\Pa  +\lambda)^{-{{\nu}}} h_\lambda\|_\Hcal \le \| (J_\Pa^\ast J_\Pa)^{-{{\nu}}} h_\lambda\|_\Hcal &\le \|(J_\Pa^\ast J_\Pa)^{-{\nu}} h_0\|_\Hcal, \label{hb1s}\\
      \lim_{\lambda\to 0}  (J_\Pa^\ast J_\Pa+\lambda)^{-{{\nu}}} h_\lambda  = \lim_{\lambda\to 0} (J_\Pa^\ast J_\Pa)^{-{{\nu}}} h_\lambda&=(J_\Pa^\ast J_\Pa)^{-{\nu}} h_0\quad\text{in $\Hcal$,}  \label{hb2s}
  %    \lim_{\lambda\to 0}\| (J_\Pa^\ast J_\Pa+\lambda)^{-{s}} h_\lambda-\eta_{s}\|_\Hcal = \lim_{\lambda\to 0}\| (J_\Pa^\ast J_\Pa)^{-{s}} h_\lambda-\eta_{s}\|_\Hcal & =0 \label{hb2s}
\end{align}
and the following rates of convergence in $\lambda$ hold, 
\begin{align}
    \| h_\lambda-h_0\|_\Hcal &\le  (1-{{\nu}})  \|(J_\Pa^\ast J_\Pa)^{-{\nu}} h_0\|_\Hcal  \lambda^{{{\nu}}} ,\label{hb3s}\\
    \| g_0-{p}-J_\Pa h_\lambda \|_{L^2_{\Pa}}  &\le  ({{\nu}}+1/2)  \|(J_\Pa^\ast J_\Pa)^{-{\nu}} h_0\|_\Hcal  \lambda^{{{\nu}}+1/2} .\label{hb4s}
\end{align}
 
\end{enumerate}
\end{lemma}

\section{Sample estimator, asymptotic guarantees, and representer theorem}\label{sec:sampleestimator}

To facilitate empirical work, we next introduce sample estimators for $h_\lambda$. 
Let $z_{\Pa,1},\dots,z_{\Pa,n}$ and $z_{\Q,1},\dots,z_{\Q,n}$ be i.i.d.~samples from $\Pa$ and $\Q$, respectively, with sample size $n\ge 1$.\footnote{For notational simplicity, we assume samples of equal size $n$. All results below can be extended to differing sample sizes for $\Pa$ and $\Q$, albeit at the cost of additional notational complexity.} 
We denote the combined sample by
\[
z_i\coloneqq z_{\Pa,i},\quad 
z_{n+j}\coloneqq z_{\Q,j},
\quad\text{for } i,j=1,\dots,n ,
\]
and define the arrays of sample points $\bm z\coloneqq [z_1,\dots,z_{2n}]^\top$, and similarly $\bm z_{\Pa}$ and $\bm z_{\Q}$.

For any function $h$ on $\Zcal$ and any array of points $\bm\xi=[\xi_1,\dots,\xi_t]^\top$, we write $h(\bm\xi)$ for the array of function values and $k(\cdot,\bm\xi^\top)=[k(\cdot,\xi_1),\dots,k(\cdot,\xi_t)]$ for the corresponding array of kernel functions.

We define the corresponding sample operators $S_\M:\Hcal\to \R^n$ by
\[
S_\M h \coloneqq h(\bm z_\M),
\]
which are the sample analogues of $J_\M$. Their adjoints are $S_\M^\ast v = k(\cdot,\bm z_\M^\top) v $. Note that the sample analogue of $J_\M^\ast$ is $ n^{-1} S_\M^\ast$. Hence the sample analogue of \eqref{optUCNS} is the convex problem
\begin{equation}\label{optUCpsample}
   \underset{h\in\Hcal}{\text{minimize}} \,  \big\{ - 2 \langle S_\Q^\ast \bm 1 - S_\Pa^\ast \bm {p}, h\rangle_\Hcal + \langle (S_\Pa^\ast S_\Pa +n \lambda) h, h\rangle_\Hcal  \big\},
\end{equation}
where we define the column vectors of ones $\bm 1\coloneqq [1,\dots,1]^\top$ and function values $\bm {p} \coloneqq p(\bm z_{\Pa})$. The unique solution of \eqref{optUCpsample} can be calculated to be
\begin{equation}\label{hlambdaeqsample} 
\hat h_\lambda = (S_\Pa^\ast S_\Pa + n\lambda)^{-1} \big(S_\Q^\ast \bm 1 - S_\Pa^\ast \bm {p} \big),
\end{equation}
which is our sample estimator of \eqref{hlambdaeqNS}.

We first provide a functional central limit theorem for this estimator.\footnote{In Proposition~\ref{propFSGaux} in the appendix we also provide corresponding finite-sample guarantees.} In the next section, we then show how to efficiently compute it. For any bounded measurable function $g:\Zcal\to\R$, we define the bounded operator $\diag(g): L^2_\M\to L^2_\M$ by point-wise multiplication, $\diag(g) f\coloneqq g \cdot f$, which is a natural generalization of the vector-to-matrix $\diag$ operator.

\begin{theorem}\label{thmAC}
The estimator \eqref{hlambdaeqsample} satisfies the following properties, 
    \begin{enumerate}
  \item\label{thmAC1} Asymptotic consistency: $\hat h_\lambda \to h_\lambda$ in $\Hcal$ a.s.\ as $n\to\infty$.

  \item\label{thmAC2} Functional central limit theorem: $n^{1/2}(\hat h_\lambda - h_\lambda)\to \Ncal(0,O_\lambda)$ in distribution as $n\to\infty$, where the covariance operator $O_\lambda:\Hcal\to\Hcal$ is given by $O_\lambda = (J_\Pa^\ast J_\Pa +\lambda)^{-1} Q_\lambda (J_\Pa^\ast J_\Pa +\lambda)^{-1}$ for the non-negative self-adjoint trace-class operator $Q_\lambda:\Hcal\to\Hcal$ defined by the sum
  \begin{equation}\label{Clamdef}
     \begin{aligned}
  Q_\lambda  &\coloneqq J_\Q^\ast J_\Q - (J_\Q^\ast 1)\otimes (J_\Q^\ast 1) \\
  &\quad + J_\Pa^\ast \diag({p}+J_\Pa h_\lambda)^2 J_\Pa - (J_\Pa^\ast({p}+J_\Pa h_\lambda))\otimes (J_\Pa^\ast({p}+J_\Pa h_\lambda))
\end{aligned} 
  \end{equation}
  of the covariance operators of the canoncial feature map $z\mapsto k(\cdot,z)$ under the true measure $\Q$ and the population estimate $({p}+J_\Pa h_\lambda)\Pa$.

\end{enumerate}
\end{theorem}
The proof is provided in Appendix~\ref{app:proof_thm4}.
Theorem~\ref{thmAC} and its proof will provide a basis for the two-sample tests below.

We now derive a representer theorem for $\hat h_\lambda$. To this end, we define the joint sampling operator
\[
S:\Hcal\to \R^{2n}\cong\R^n\oplus \R^n, 
\quad 
S\coloneqq 
\begin{bmatrix}
S_\Pa\\ 
S_\Q
\end{bmatrix},
\]
whose adjoint is
\[
S^\ast =
\begin{bmatrix}
S_\Pa^\ast ,\; S_\Q^\ast
\end{bmatrix}
:\R^{2n}\to\Hcal .
\]
Left-multiplying \eqref{hlambdaeqsample} by $S_\Pa^\ast S_\Pa + n\lambda$ shows that $\hat h_\lambda$ lies in the finite-dimensional subspace $\Ima S^\ast$ of $\Hcal$. The following lemma provides its coordinate representation in terms of the symmetric positive semidefinite $(2n)\times(2n)$ kernel matrix
\[
\bm K=
\begin{bmatrix}
\bm K_\Pa & \bm K_{\Pa\Q} \\
\bm K_{\Q\Pa} & \bm K_\Q
\end{bmatrix}
\coloneq k(\bm z,\bm z^\top),
\]
which is the matrix representation of the operator $SS^\ast$ on $\R^{2n}$. We denote by $\bm A^+$ the pseudoinverse of a matrix $\bm A$.

\begin{lemma}\label{lemRT}
The sample estimator \eqref{hlambdaeqsample} admits the coordinate representation
\begin{equation}\label{eqRT}
\hat h_\lambda =
k(\cdot,\bm z^\top)
\big(\bm K_{:,\Pa}\bm K_{\Pa,:} + \lambda n \bm K \big)^+
\big(\bm K_{:,\Q}\bm 1 - \bm K_{:,\Pa}\bm p\big).
\end{equation}
\end{lemma}
See Appendix~\ref{app:proof_lemma5} for the derivation.

\section{Nystr\"om approximation and optimal error rates}\label{sec:lowrank}

The $(2n)\times(2n)$ pseudoinverse in \eqref{eqRT} can in principle be computed, but for large $n$, say $n\ge 10^5$, the computational cost becomes prohibitive.

A standard approach to mitigate this issue is the Nystr\"om method, which selects a subsample of points $z_{\pi_1},\dots,z_{\pi_m}$ corresponding to an index set 
$\Pi=\{\pi_1,\dots,\pi_m\}\subseteq\{1,\dots,2n\}$ of size $m \le 2n$, and approximates the kernel $k$ by projecting it onto the subspace spanned by the corresponding kernel basis functions,
\[
\Hcal_\Pi\coloneqq\spn\{ k(\cdot,z_{\pi_1}),\dots, k(\cdot,z_{\pi_m})\},
\]
in $\Hcal$. The Nystr\"om approximation is well known in the machine learning literature; see, e.g., \citet[Section 19.2]{mar_tro_20} or \citet{che_etal_25}. This literature also discusses pivoting schemes, see \citet{hor_zha_05}, \citet{HPS12}, and \citet{filipovic2024adaptive}. 
 
Define $\ell\coloneq\rank(\bm K_{\Pi,\Pi})\le m$, and let $\bm R$ be any $m\times \ell$ matrix such that $\bm R\bm R^\top=(\bm K_{\Pi,\Pi})^+$, and define the $(2n)\times \ell$ matrix
\begin{equation}\label{Ldef}
    \bm L=
\begin{bmatrix}
\bm L_\Pa \\
\bm L_\Q
\end{bmatrix}
\coloneq
\bm K_{:,\Pi}\bm R,
\end{equation}  
so that $\bm L\bm L^\top=\bm K_{:,\Pi}(\bm K_{\Pi,\Pi})^+\bm K_{\Pi,:}$
is the column Nystr\"om approximation of $\bm K$, see, e.g., \citet[Section 19.2]{mar_tro_20}. In practice, $\bm L$ could be given by a pivoted incomplete Cholesky decomposition of $\bm K$, which along with $\bm R$ can be efficiently computed by incremental algorithms, see \citet{filipovic2024adaptive}.\footnote{In practice, $\bm K_{\Pi,\Pi}$ can be ill-conditioned, and we therefore compute its inverse numerically via a truncated pseudoinverse: eigenvalues smaller than $\tau$ times the largest eigenvalue are set to zero, for some threshold $\tau>0$ (e.g., $\tau=10^{-12}$). This is particularly relevant for the Gaussian kernel, whose eigenvalues decay rapidly.} It is an elementary fact that the orthogonal projection $P_\Pi$ in $\Hcal$ onto $\Hcal_\Pi$ is given by
\begin{equation}\label{R1}
P_\Pi h =
k(\cdot,\bm z_\Pi^\top)\bm R\bm R^\top
h(\bm z_\Pi),
\quad h\in\Hcal .
\end{equation}
Hence $P_\Pi h$ can be evaluated efficiently by querying only $m$ kernel basis functions.

We approximate the convex problem \eqref{optUCpsample} over $\Hcal$ by restricting it to the subspace $\Hcal_\Pi$. This yields the $m$-dimensional convex problem
\begin{equation}\label{optUCpsampleLR}
\underset{h\in\Hcal_\Pi}{\text{minimize}}\;
\big\{
- 2 \langle P_\Pi (S_\Q^\ast \bm 1 - S_\Pa^\ast \bm p), h\rangle_\Hcal
+
\langle (P_\Pi S_\Pa^\ast S_\Pa P_\Pi + n \lambda) h, h\rangle_\Hcal
\big\}.
\end{equation}
The unique solution of \eqref{optUCpsampleLR} is
\begin{equation}\label{hlambdaeqsampleLR}
\hat h_{\lambda,\Pi}
=
(P_\Pi S_\Pa^\ast S_\Pa P_\Pi + n\lambda)^{-1}
P_\Pi ( S_\Q^\ast \bm 1 - S_\Pa^\ast \bm p ),
\end{equation}
which is a low-rank approximation of the sample estimator \eqref{hlambdaeqsample}. It can be computed efficiently, as the following result shows.

\begin{lemma}\label{lemapplocal}
The approximated estimator \eqref{hlambdaeqsampleLR} can be expressed in coordinates as
\begin{align}  
\hat h_{\lambda,\Pi}
&=
k(\cdot,\bm z_\Pi^\top)\bm R
\big(\bm L_\Pa^\top \bm L_\Pa + n\lambda\big)^{-1}
\big(\bm L_\Q^\top \bm 1 - \bm L_\Pa^\top \bm p\big)\label{hlambdaeqsampleLRcoor1}\\
&=k(\cdot,\bm z_\Pi^\top)\big(\bm K_{\Pi,\Pa}\bm K_{\Pa,\Pi} + \lambda n \bm K_{\Pi,\Pi}\big)^+\big(
\bm K_{\Pi,\Q}\bm 1 -\bm K_{\Pi,\Pa}\bm p\big),\label{hlambdaeqsampleLRcoor2}
\end{align}
which can be computed with complexity $\Ocal(m^2 n)$.
\end{lemma}
See Appendix~\ref{app:proof_lemma6} for the derivation. Expression~\eqref{eqRT} is a special case of \eqref{hlambdaeqsampleLRcoor2} with $m=2n$.

We next discuss schemes for selecting the set of Nystr\"om points $\Pi$. In particular, we consider (random) $\Pa$-schemes that select Nystr\"om points exclusively from the $\Pa$-sample,
\begin{equation}\label{Ppivot}
\Pi=\{i_1,\dots,i_m\}\subseteq\{1,\dots,n\}.
\end{equation}
While there is no a priori reason why this restriction should outperform subsampling from the combined sample, the following theorem shows that it nevertheless achieves optimal total error rates for the low-rank approximation. The key observation is that the computational error
\begin{equation}\label{defCE}
\Ccal_{\lambda,\Pi}\coloneq \|(I-P_\Pi) (J_\Pa^\ast J_\Pa+\lambda)^{1/2}\|^2,
\end{equation}
plays a central role: although defined for an arbitrary set of points $\Pi$, it can be controlled solely in terms of the number of Nystr\"om points drawn from the $\Pa$-sample.

\begin{theorem}\label{thmAEbound}
Assume that source condition \eqref{sourcecond} holds for some $\nu\in [0,1/2]$, and denote by $R\coloneq \|(J_\Pa^\ast J_\Pa)^{-\nu}h_0\|_\Hcal < \infty$.\footnote{This condition corresponds to \cite[Assumptions 1 and 4]{rud_cam_ros_15}.} 
Let the set $\Pi$ be given by any (random) $\Pa$-scheme \eqref{Ppivot} such that, for any $s\in (0,1)$ with subsampling probability of at least $1-s$, the computational error is bounded by
\begin{equation}\label{asscomperror}
\Ccal_{\lambda,\Pi} \le 3\lambda,\quad\text{if $m\ge C_{CE}(s,\lambda,n)$,}
\end{equation}
for a coefficient $C_{CE}$ that grows in $\lambda^{-1}$ and $n$ at rate 
\begin{equation}\label{rateCCE}
C_{CE}(s,\lambda,n)=O(\lambda^{-1}\log n).
\end{equation}  
Let $\eta\in (0,1)$ and assume
\begin{equation}\label{lambdancondn}
 n^{(1+2\nu)/(2+2\nu)} \ge \frac{13 \kappa^2}{\|J_\Pa^\ast J_\Pa\|}\big( \log\frac{24\kappa^2}{\eta }+\frac{1}{2+2\nu }\log(n)\big).
\end{equation}
Then the following bounds hold jointly with sampling probability of at least $1-\eta$,
\begin{align}
\|  {\hat h_{\lambda,\Pi}}- h_0 \|_\Hcal  &\le \big(C_{AE,\Hcal}(\eta,\nu) + (1-\nu)R \|J_\Pa^\ast J_\Pa\|^\nu\big) n^{-\nu/(2+2\nu)},\label{eqHrates}\\
% \| J_\Pa {\hat h_{\lambda,\Pi}} - J_\Pa h_\lambda\|_{L^2_\Pa} & \le C_{AE,L^2_\Pa}(\eta,\nu)n^{-(1+ 2\nu)/(4+4\nu)},\label{eqLrates}\\
  \Ecal({\hat h_{\lambda,\Pi}})  -  \Ecal(h_0)  &\le \big(   C_{AE,L^2_\Pa}(\eta,\nu) + (1/2+\nu)R \|J_\Pa^\ast J_\Pa\|^{1/2+\nu}\big)^2   n^{-(1+ 2\nu)/(2+2\nu)},\label{totalrates}
\end{align}
with $ \hat h_{\lambda,\Pi}$ as in \eqref{hlambdaeqsampleLRcoor1}--\eqref{hlambdaeqsampleLRcoor2}, $\lambda = \| J_\Pa^\ast J_\Pa\| n^{-1/(2+2\nu)}$ and
\begin{equation}\label{eqmLB}
m \ge C_{CE}(\eta/3,\| J_\Pa^\ast J_\Pa\| n^{-1/(2+2\nu)},n) = O(n^{1/(2+2\nu)}\log(n)),
\end{equation}
for the coefficients
\begin{align}
C_{AE,\Hcal}(\eta,\nu) &\coloneq 3^{1+\nu}  R \|J_\Pa^\ast J_\Pa\|^\nu    + 1.16\, C_{\Delta}(\eta)  ,\label{defCAEH}\\
C_{AE,L^2_\Pa}(\eta,\nu) &\coloneq \big(4.19\cdot 3^{\nu}   R \|J_\Pa^\ast J_\Pa\|^{\nu}   + 1.42\,  C_{\Delta}(\eta)   \big)\|J_\Pa^\ast J_\Pa\|^{1/2} ,\label{defCAEL} 
\end{align}
and $C_\Delta(\eta)\coloneqq 2\kappa\sqrt{2\log(12/\eta)}\big(  1+\|p\|_\infty +R  \kappa^3\big)\big( 1 +0.07 \eta^{-1/2}\big)\|J_\Pa^\ast J_\Pa\|^{-1}$.
\end{theorem}
The proof of the theorem is provided in Appendix~\ref{app:proof_thm}.

A baseline scheme satisfying the assumptions of Theorem~\ref{thmAEbound} is described in the following remark.

\begin{remark}
The \textbf{plain Nystr\"om $\Pa$-scheme} selects $m$ indices uniformly at random from $\{1,\dots,n\}$ without replacement. By \citet[Lemma 6]{rud_cam_ros_15}, this scheme satisfies \eqref{asscomperror}--\eqref{rateCCE} with coefficient
\[
C_{CE}(s,\lambda,n)=  (67\vee 5\kappa^2\lambda^{-1})\log\big(4\kappa^2/(\lambda s)\big).
\]
Another admissible $\Pa$-scheme considered in \cite{rud_cam_ros_15} is the approximate leverage scores Nystr\"om subsampling, for which an explicit expression for $C_{CE}(s,\lambda,n)$ is also available. It remains an open problem to establish comparable, potentially sharper bounds for alternative schemes, such as the adaptive random pivoting scheme of \citet{che_etal_25}.
\end{remark}

The total error rates in \eqref{totalrates} compare to the main result in \cite[Theorem 1]{rud_cam_ros_15}, who also show that these error rates are optimal in a minimax sense.\footnote{Our rates correspond to the rates in \cite[Theorem 1]{rud_cam_ros_15} for their ``$\gamma=1$''. } Comparing to  \citet[][Section 2.7]{liracine06} for local kernel density estimators on Euclidean state spaces $\Zcal\subseteq\R^d$ of dimension $d$, their convergence rate of the mean squared error is quoted as $O(n^{ -4/(d+4) })$, with correctly specified model and optimal bandwidth.  As noted in \citet{DUONG2005417}, local kernel density estimation is therefore deemed challenging for dimensions $d>4$, even with large samples.

\section{Applications}\label{sec:applications} 
In this section, we illustrate  two applications of KDM. The first is hypothesis testing, the second is estimating conditional distributions.

\subsection{Kernel-based two-sample testing}
\label{sec:two-sample}

As a first application, we obtain statistical tests of the hypothesis that the true equals the prior density,
\begin{equation}\label{nullh}
    g_\star={p},\quad \text{$\Pa$-a.s.}
\end{equation}  
The following theorem provides testable implications of KDM.\footnote{Under the assumption that the kernel $k$ is universal in the sense that $\overline{\Ima J_\Pa}=L^2_\Pa$, or equivalently $\ker J_\Pa^\ast=\{0\}$, the converse of Theorem~\ref{lthmnullh} also holds: \eqref{nullh1} implies \eqref{nullh}. This follows from~\eqref{hlambdaeqpre}. On the other hand, \eqref{nullh} implies that $g_0={p}$. However, the converse does not hold, because the orthogonal projection of $g_\star-{p}$ onto $\overline{\Ima J_\Pa}$ can be zero also when \eqref{nullh} does not hold.}

\begin{theorem}\label{lthmnullh}
Hypothesis \eqref{nullh} implies that
\begin{equation}\label{nullh1}
    h_\lambda=0,\quad\text{for any $\lambda>  0$.}
\end{equation}
In this case, the following testable properties hold, where $\bm L$ is given by \eqref{Ldef}:
\begin{enumerate}
    \item\label{lthmnullh1} Let $\lambda>0$ and $\eta\in (0,1)$. Then, with sampling probability of at least $1-\eta$, we have
    \begin{multline}\label{eqtestnullnew}
     \|   \big(\bm L_\Pa^\top \bm L_\Pa + n\lambda\big)^{-1}  \big(\bm L_\Q^\top \bm 1 - \bm L_\Pa^\top \bm {p} \big)\|_2    \\ \le  1.16\cdot 2\kappa\sqrt{2\log(12/\eta)}\big(1+\|p\|_\infty\big)\big( 1 +0.07\eta^{-1/2}\big) \lambda^{-1} n^{-1/2},
 \end{multline}
 for all $n$ such that $\lambda n\ge 13\kappa^2\log\frac{24\kappa^2}{\eta\lambda}$, and asymptotically for large $n$,
 \begin{multline} \label{eqtestnullJP}
     \|  \bm L_\Pa \big(\bm L_\Pa^\top \bm L_\Pa + n\lambda\big)^{-1}  \big(\bm L_\Q^\top \bm 1 - \bm L_\Pa^\top \bm {p} \big)\|_2    \\ \le  \sqrt{2}\cdot 2\kappa\sqrt{2\log(12/\eta)}\big(1+\|p\|_\infty \big)\big( 1 +0.07\eta^{-1/2}\big) \lambda^{-1/2}.
 \end{multline}

\item\label{lthmnullh2} Asymptotically for large $n$, the $\R^\ell$-valued sample variable  \begin{equation}\label{vlamN}
    v_\lambda \coloneqq n^{-1/2}(\bm L_\Q^\top\bm 1-\bm L_\Pa^\top\bm {p}) \sim \Ncal(0,\bm\Sigma)
\end{equation}  
is normally distributed with mean zero and $\ell\times \ell$-covariance matrix given by
\begin{equation}\label{SigN}
    \bm \Sigma = n^{-1}\bm L_\Q^\top \bm L_\Q - n^{-2} \bm L_\Q^\top\bm 1\bm 1^\top \bm L_\Q + n^{-1} \bm L_\Pa^\top \diag(\bm {p})^2\bm L_\Pa - n^{-2} \bm L_\Pa^\top\bm {p}\bm {p}^\top \bm L_\Pa.
\end{equation}

\item\label{lthmnullh3} Plug-in Mahalanobis distance: Asymptotically for large $n$, the test statistic
\begin{equation}\label{eq:teststat} 
T_\ell\coloneqq v_\lambda^\top \bm \Sigma^{-1} v_\lambda \sim\chi^2(\ell)
\end{equation}
is $\chi^2$-distributed with $\ell$ degrees of freedom.

\item\label{lthmnullh4} Plug-in Gamma approximation: Denote by $\bm\Sigma = \bm A \bm W\bm A^\top$ the spectral decomposition of the matrix in \eqref{SigN}, with normalized eigenvectors $\bm A=[a_1,\dots,a_m]$ and eigenvalues $w_1\ge \cdots\ge w_\ell> 0$ on the diagonal of $\bm W$. Then, asymptotically for large $n$, the test statistic
\begin{equation}
    S_\Gamma \coloneq \| \bm A^\top v_\lambda\|_2^2 \sim \Gamma(\alpha,\beta)
\end{equation}
is approximately Gamma-distributed with shape $\alpha=\frac{\trace(\bm W)^2}{2\trace(\bm W^2)}$ and scale $\beta=\frac{2\trace(\bm W^2)}{\trace(\bm W)}$.
\end{enumerate}
\end{theorem}
The proof of the theorem is provided in Appendix~\ref{app:proof_thm9}.

For ${p}=1$, the hypothesis \eqref{nullh} is equivalent to $\Q=\Pa$, which is the same hypothesis as studied in \citet{pfisteretal17}. Then the sample variable $u_\lambda$ in \eqref{ulamhyp} in the proof of Theorem~\ref{lthmnullh} is the scaled difference $S_\Q^\ast \bm 1 - S_\Pa^\ast \bm 1$ between the sample kernel mean embeddings of $\Q$ and $\Pa$. As a corollary of Theorem~\ref{lthmnullh} we thus recover the kernel two-sample test of \citet{gre_etal_12} as a special case, connecting their maximum mean discrepancy approach \citep[][Theorem 5]{gre_etal_12} with the Hilbert--Schmidt independence criterion \citep[][Proposition 1]{pfisteretal17}. The Gamma approximation in \ref{lthmnullh4} has been suggested by \citet{gre_bou_smo_sch_05,pfisteretal17}. It remedies the situation where the test statistic \eqref{eq:teststat} is numerically unstable for eigenvalues $w_i$ that are close to zero.

\subsection{Conditional distribution estimation}
\label{sec:cond-distr-est}

As a second application, we obtain a novel, consistent estimator of conditional distributions. Thereto, we assume that $\Zcal=\Xcal\times\Ycal$ is the product of two countably generated measurable spaces $\Xcal$ and $\Ycal$, and consider random variables $X$ and $Y$ with values in $\Xcal$ and $\Ycal$. We denote by $\Pa_X$, $\Pa_Y$ and $\Pa_{(X,Y)}$ their marginal and joint distributions, and set $\Pa\coloneqq\Pa_X\otimes\Pa_Y$ and $\Q\coloneqq\Pa_{(X,Y)}$. Assumption \eqref{assu1} then reads
\begin{equation}\label{assu1XY}
    \Pa_{(X,Y)}\ll \Pa_X\otimes\Pa_Y.
\end{equation}
We first derive our estimator of the conditional distribution based on a joint sample of $(X,Y)$. We then briefly discuss the relevance and restrictiveness of Assumption~\eqref{assu1XY}. We also relate our conditional distribution estimator to the literature. 

\subsubsection{Derivation of the estimator}

Assumption~\eqref{assu1XY} implies that the conditional distribution of $Y$ given
$X=x$, and of $X$ given $Y=y$, exists and (a version of it) can be expressed in terms of $g_{\star}$ as
\begin{equation}\label{condist}
 \Pa_{Y\mid X=x}( d  y) = g_{\star}(x,y)\Pa_Y( d  y),
\end{equation}
and $\Pa_{X\mid Y=y}( d  x) = g_{\star}(x,y)\Pa_X( d  x)$, respectively. In the following, we will concentrate on the former. By symmetry, all statements also apply to the latter.

For any candidate density \eqref{eq:ansatz} with $h\in\Hcal$, we thus obtain the corresponding model of the true conditional expectation $\E
[f(Y)\mid X=x]=\int_\Ycal f (y) \Pa_{Y\mid  X=x}(d y)$ given by
\begin{equation}\label{eqCEfYX}
     \E_h[ f(Y)\mid X=x]\coloneqq \int_\Ycal f (y) ( {p}(x,y)+h(x,y))\,\Pa_Y(dy),\quad \text{for $f\in  L^2_{\Pa_Y}$.}
\end{equation}
In view of the isometric isomorphisms
\begin{equation}\label{eqisoL2}
  L^2_{\Pa_X\otimes \Pa_Y}=L^2_{\Pa_X}\otimes L^2_{\Pa_Y}=
  L^2_{\Pa_X}(\Xcal;L^2_{\Pa_Y}) =L^2_{\Pa_Y}(\Ycal;L^2_{\Pa_X}),
\end{equation}
the right hand side of \eqref{eqCEfYX} as a function of $x$ is an element in $L^2_{\Pa_X}$. In that sense, we can decompose the error function \eqref{eqPML2} as
\begin{equation}\label{boundEYXnew}
\begin{aligned}
    \Ecal(h)  &= \int_\Xcal \int_\Ycal \big(g_\star(x,y) - {p}(x,y)-h(x,y)\big)^2\,\Pa_Y(dy) \Pa_X(d x) \\
     &=  \int_\Xcal  \bigg(\sup_{\|f\|_{L^2_{\Pa_Y}}\le 1}\Big|\E
[f(Y)\mid X=x]-\E_h
[f(Y)\mid X=x]\Big|\bigg)^2\, \Pa_X(d x).
\end{aligned}
 \end{equation}
Hence $\Ecal(h)$ can be interpreted as the squared worst-case error of the conditional expectation model \eqref{eqCEfYX}. The guarantees in Theorem~\ref{thmAEbound} apply accordingly to the following low-rank estimator of the conditional expectation,\footnote{In the sense that for any test function $f\in L^2_{\Pa_Y}$ the squared $L^2_{\Pa_X}$-error of the conditional expectation is bounded by $ \big\|\E
[f(Y)\mid X=\cdot]-\E_{{\hat h_{\lambda,\Pi}}}
[f(Y)\mid X=\cdot] \big\|_{L^2_{\Pa_X}}^2  \le \Ecal({\hat h_{\lambda,\Pi}}) \|f\|_{L^2_{\Pa_Y}}^2$, which can be bounded by \eqref{totalrates}.} which we obtain by combining \eqref{eqCEfYX} with the previous sections, 
\begin{align}
\E_{{\hat h_{\lambda,\Pi}}}
[f(Y)\mid X=x] &=\int_\Ycal f (y) \big( {p}(x,y)+{\hat h_{\lambda,\Pi}}(x,y)\big)\,\Pa_Y(dy)\label{approxCE1}\\
  &\approx  \bar{n}^{-1} \sum_{i=1}^{\bar n}  f(\bar y_i)\big({p}(x,\bar y_i)  +   {\hat h_{\lambda,\Pi}}(x,\bar y_i)\big)\label{approxCE2}\\
  &\approx\frac{ \sum_{i=1}^{\bar n}  f(\bar y_i)\big({p}(x,\bar y_i)  +   {\hat h_{\lambda,\Pi}}(x,\bar y_i)\big)^+}{\sum_{j=1}^{\bar n}   \big({p}(x,\bar y_j)  +   {\hat h_{\lambda,\Pi}}(x,\bar y_j)\big)^+},\label{approxCE3}
\end{align}
for any $f\in L^2_{\Pa_Y}$. Here we estimate the $\Pa_Y$-integral in \eqref{approxCE1} by the empirical counterpart given by~\eqref{approxCE2}, for an auxiliary i.i.d.\ sample $\bar y_1,\dots,\bar y_{\bar n}$ of $\Pa_Y$. The last approximation in \eqref{approxCE3} is practically motivated such that we integrate $f$ with respect to a bona-fide conditional distribution.\footnote{The density $g_\star$ satisfies the implicit structural properties $g_\star(x,y)\ge 0$ $\Pa_X\otimes\Pa_Y $-a.s., $\int_\Ycal g_{\star}(x,y)\Pa_Y( d  y)=1$ $\Pa_X$-a.s., and $\int_\Xcal g_{\star}(x,y)\Pa_X( d  x)=1$ $\Pa_Y$-a.s.}

The estimator ${\hat h_{\lambda,\Pi}}$ in \eqref{approxCE1} requires  i.i.d.\ samples $z_{\Pa,1},\dots,z_{\Pa,n}$ of $\Pa= \Pa_X\otimes\Pa_Y$ and $z_{\Q,1},\dots,z_{\Q,n}$ of $\Q= \Pa_{(X,Y)}$. If we could only sample from the joint distribution, we would consider an i.i.d.\ sample $(x_1,y_1),\dots,(x_{3n},y_{3n})$ of $\Pa_{(X,Y)}$ of size $3n$ and set 
\begin{equation}\label{zPXY}
     z_{\Pa,i} \coloneqq  (x_{2i-1},y_{2i}) ,\quad  z_{\Q,i} \coloneqq (x_{2n+i},y_{2n+i})  ,\quad i=1,\dots,n.
\end{equation}

\begin{remark}
The sampling scheme \eqref{zPXY} may be prohibitive in terms of the required total sample size $3n$. In practice, observing an i.i.d.\ sample $(x_1,y_1),\dots,(x_{n},y_{n})$ of $\Pa_{(X,Y)}$ of size $n$, one could replace \eqref{zPXY} by $z_{\Pa,i} \coloneqq  (x_{i},y_{i+1})$, with $y_{n+1}\coloneqq y_1$, and $z_{\Q,i} \coloneqq (x_{i},y_{i})$, $i=1,\dots,n$. This comes at the cost of introducing bias though the mutual dependence of $z_{\Pa,i}$, $z_{\Pa,i+1}$, $z_{\Q,i}$. Similarly, the auxiliary i.i.d.\ sample $\bar y_1,\dots,\bar y_{\bar n}$ of $\Pa_Y$ in \eqref{approxCE2} and \eqref{approxCE3} could be obtained by setting $\bar y_i= y_i$ for some $\bar n \le n$.
\end{remark}

Under the assumptions of Theorem~\ref{thmAEbound}, the approximation \eqref{approxCE3} of \eqref{approxCE2} can be shown to be asymptotically exact as $n \to \infty$. This follows from \eqref{eqHrates} and because the $\Hcal$-norm dominates the sup-norm by assumption \eqref{asskappa}.

\subsubsection{Relevance and scope of the absolute continuity assumption}\label{subrelassu}

Assumption \eqref{assu1XY} is related to the mutual information $I(X,Y)=\int_{\Zcal} \log(g_{\star}) \,d \Pa_{(X,Y)}$ of $X$ and $Y$, the Kullback--Leibler divergence of $\Pa_{(X,Y)}$ from $\Pa_X\otimes\Pa_Y $, which is well-defined and finite if and only if \eqref{assu1XY} holds. Some literature thus refers to $g_{\star}$ as the mutual information density. 
Assumption \eqref{assu1XY} is not restrictive in practice, as the following lemma shows. 
\begin{lemma}\label{lemassu1XY}
Assume there exist measures $\mu_\Xcal$ and $\mu_\Ycal$ on $\Xcal$ and $\Ycal$ such that $\Pa_{(X,Y)}$ is absolutely continuous with respect to the product measure $\mu_\Xcal\otimes\mu_\Ycal$, with say density~$f$, 
\begin{equation}\label{eqlemass}
     d \Pa_{(X,Y)} = f  d \,(\mu_\Xcal\otimes\mu_\Ycal).
\end{equation}
Then \eqref{assu1XY} holds.
\end{lemma}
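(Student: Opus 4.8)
The plan is to write all the relevant measures through the common reference measure $\mu_\Xcal\otimes\mu_\Ycal$ and compare their null sets. First I would compute the marginal densities: by Tonelli's theorem applied to the nonnegative density $f$ in \eqref{eqlemass}, the marginals satisfy $d\Pa_X=f_X\,d\mu_\Xcal$ and $d\Pa_Y=f_Y\,d\mu_\Ycal$ with $f_X(x)\coloneqq\int_\Ycal f(x,y)\,\mu_\Ycal(dy)$ and $f_Y(y)\coloneqq\int_\Xcal f(x,y)\,\mu_\Xcal(dx)$, and hence $d(\Pa_X\otimes\Pa_Y)(x,y)=f_X(x)f_Y(y)\,d(\mu_\Xcal\otimes\mu_\Ycal)(x,y)$.

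Next I would take any measurable $A\subseteq\Xcal\times\Ycal$ with $(\Pa_X\otimes\Pa_Y)(A)=0$ and conclude $\Pa_{(X,Y)}(A)=0$, which is precisely the absolute continuity \eqref{assu1XY}. Since $\int_A f_X f_Y\,d(\mu_\Xcal\otimes\mu_\Ycal)=0$ with $f_X f_Y\ge 0$, the product $f_X(x)f_Y(y)$ vanishes $(\mu_\Xcal\otimes\mu_\Ycal)$-a.e.\ on $A$, so $A$ agrees, modulo a $(\mu_\Xcal\otimes\mu_\Ycal)$-null set, with $A\cap(N_\Xcal\cup N_\Ycal)$ where $N_\Xcal\coloneqq\{x:f_X(x)=0\}\times\Ycal$ and $N_\Ycal\coloneqq\Xcal\times\{y:f_Y(y)=0\}$. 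The key step is that $f$ itself vanishes $(\mu_\Xcal\otimes\mu_\Ycal)$-a.e.\ on these two cylinders: applying Tonelli once more, $\int_{N_\Xcal}f\,d(\mu_\Xcal\otimes\mu_\Ycal)=\int_{\{f_X=0\}}f_X\,d\mu_\Xcal=0$, and since $f\ge 0$ this forces $f=0$ a.e.\ on $N_\Xcal$, and symmetrically on $N_\Ycal$. Consequently $\Pa_{(X,Y)}(A)=\int_A f\,d(\mu_\Xcal\otimes\mu_\Ycal)=\int_{A\cap(N_\Xcal\cup N_\Ycal)}f\,d(\mu_\Xcal\otimes\mu_\Ycal)=0$, as desired.

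The argument is short, and the only point that needs care is the legitimacy of the two applications of Tonelli's theorem --- equivalently, the $\sigma$-finiteness of $\mu_\Xcal$ and $\mu_\Ycal$, which is in any case implicit in speaking of the product measure $\mu_\Xcal\otimes\mu_\Ycal$. I would simply record that one may take $\mu_\Xcal,\mu_\Ycal$ to be $\sigma$-finite without loss of generality (this covers all cases of interest, e.g.\ Lebesgue or counting measures), under which the marginal densities $f_X,f_Y$ are well defined and measurable and all the Fubini--Tonelli manipulations above are justified. Beyond this bookkeeping I do not anticipate any genuine obstacle.
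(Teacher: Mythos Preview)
Your proof is correct and follows essentially the same route as the paper's: compute the marginal densities $f_X,f_Y$ via Tonelli, observe that $d(\Pa_X\otimes\Pa_Y)=f_Xf_Y\,d(\mu_\Xcal\otimes\mu_\Ycal)$, and show that $f$ vanishes $(\mu_\Xcal\otimes\mu_\Ycal)$-a.e.\ on the set $\{f_Xf_Y=0\}$. The only cosmetic difference is that the paper works with the positivity sets $S_X=\{f_X>0\}$, $S_Y=\{f_Y>0\}$ and replaces $f$ by $f\,1_{S_X\times S_Y}$, whereas you work directly with the complementary zero cylinders $N_\Xcal,N_\Ycal$ and verify null sets; these are two phrasings of the same computation.
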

The proof is provided in Appendix~\ref{app:proof_lemma11}.

 A simple example where the assumption of Lemma \ref{lemassu1XY} holds is easily constructed. Take $\Xcal=\Ycal=\R$ and $\mu_\Xcal =\mu_\Ycal$ is the Lebesgue measure on $\R$, and consider $(X,Y)$ jointly Gaussian with non-degenerate covariance. Then \eqref{assu1XY} is satisfied. A counter example to \eqref{assu1XY} is similarly easily devised  by setting $Y=X$ for a Gaussian random variable $X$. Then the joint distribution of $(X,Y)=(X,X)$ is concentrated on the diagonal in $\R^2$ and \eqref{assu1XY} does not hold. This counter example also illustrates that exceptions to \eqref{assu1XY} are rather degenerate limiting cases.

\subsubsection{Comparison with related literature}\label{ssec_comp}
We relate our conditional distribution estimator \eqref{approxCE1} to other approaches in the literature. 

\paragraph{Kernel conditional mean embeddings}\label{secCME}
In view of the decomposition \eqref{boundEYXnew} it is natural to compare conditional expectations \eqref{eqCEfYX} calculated from a candidate density \eqref{eq:ansatz} to the conditional mean embedding introduced in \citet{songetal09}. To facilitate this comparison, we assume in this subsection that $\Hcal=\Hcal_\Xcal\otimes\Hcal_\Ycal$ is the tensor product of two separable RKHS $\Hcal_\Xcal$ and $\Hcal_\Ycal$ with measurable kernels $k_\Xcal$ and $k_\Ycal$ on $\Xcal$ and $\Ycal$, respectively. In line with \eqref{asskappa}, we also assume that $\sup_{x\in\Xcal}   k_\Xcal(x,x)<\infty$ and $\sup_{y\in\Ycal} k_\Ycal(y,y) <\infty$, so that the canonical embeddings $J_{\Pa_X}\colon\Hcal_\Xcal\to L^2_{\Pa_X}$, $J_{\Pa_Y}\colon\Hcal_\Ycal\to L^2_{\Pa_Y}$ are Hilbert--Schmidt operators. We then have $k\big((x_1,y_1),(x_2,y_2)\big) = k_\Xcal(x_1,x_2) k_\Ycal(y_1,y_2)$, and in view of \eqref{eqisoL2} we can identify the operators $J_{\Pa_X\otimes\Pa_Y}   = J_{\Pa_X}\otimes J_{\Pa_Y}$.

Combining this with \eqref{assuginL2}, we obtain kernel embeddings of the conditional distributions~\eqref{condist} in the sense that
\begin{equation}\label{eqkembcd}
  \int_\Ycal f (y) \Pa_{Y\mid  X=x}(d y) =\langle J_{\Pa_Y} f ,   g_{\star}(x,\cdot)\rangle_{L^2_{\Pa_Y}}
  =\langle  f , J_{\Pa_Y}^\ast g_{\star}(x,\cdot)\rangle_{\Hcal_Y} ,\quad \text{for $f \in \Hcal_Y$,}
\end{equation}
and similarly for $\Pa_{X\mid Y=y}(dx)$. We thus identify from \eqref{eqkembcd} the element $J_{\Pa_Y}^\ast g_{\star}(x,\cdot)\in\Hcal_Y$ as the conditional mean embedding $\mu_{Y\mid X=x}$ introduced in \citet{songetal09}. More specifically, \citet{songetal09} realize $\mu_{Y\mid X=x}$ through unbounded linear operations between the RKHS $\Hcal_X$ and $\Hcal_Y$. A rigorous theory is given by \citet{klebanovschustersullivan20}, which reveals that, albeit mathematically elegant, the linear operator approach of \citet{songetal09} comes with some practical limitations. First, it requires an elaborate analysis based on sophisticated knife-edge technical assumptions. Second, the inverse problem of recovering $\Pa_{Y\mid X=x}$ from $\mu_{Y\mid X=x}$ is ill-posed, as discussed   in \citet{sch_etal_20}, and left open as a ``fruitful avenue of research'' in \citet{klebanovschustersullivan20}. Third, the conditional mean embedding acts through \eqref{eqkembcd} only on functions $f\in\Hcal_\Ycal$.

Our approach, via the density $g_\star$, overcomes the above limitations of the traditional conditional mean embedding approach. First, our construction of $\mu_{Y\mid X=x}$ is more elementary and feasible under verifiable technical assumptions, such as \eqref{assu1} and \eqref{assuginL2}. Second, our candidate model \eqref{eq:ansatz} of $g_\star$ yields the candidate conditional distribution $\Pa_{Y\mid X=x}$ directly using identity \eqref{condist}. Third, we thus compute conditional expectations \eqref{eqCEfYX} directly for all test functions $f\in L^2_{\Pa_Y}$, and not only for $f\in \Hcal_\Ycal$.

\paragraph{Other approaches}  \citet{sug_etal_10,hin_etal_21} estimate conditional density functions with respect to Lebesgue measure. Our setup is more general. In turn, we do not obtain a direct estimator for the conditional Lebesgue density function, hence the evaluation metric in \citet{sug_etal_10,hin_etal_21} (the negative log likelihood) cannot be applied here and a direct comparison with the scores in \citet{sug_etal_10,hin_etal_21} is not possible.

Also the setup in \citet{suz_etal_09, sug_13} is restricted to continuous variables with a Lebesgue density. In fact, they assume the assumptions of Lemma~\ref{lemassu1XY} hold for the Lebesgue measures $\mu_\Xcal(dx)=dx$ and $\mu_\Ycal(dy)=dy$. In this case, the ``density $w(x,y)$'' in \citet{suz_etal_09} is equal to our density $g_\star(x,y)$. Their estimator of the density is based on a sampling scheme, similar to \citet{filipovic2024adaptive}, which is different from ours. As a consequence, they do not provide finite sample guarantees nor a central limit theorem.

There is a vast literature on (semi-)parametric distributional regression approaches, which includes structured additive models, see \citet{rue_etal_23} for a recent overview. In contrast, our approach is nonparametric functional analytic and learns the density in a reproducing kernel Hilbert space, which comes with asymptotic theory and finite sample guarantees. Isotonic distribution regression in \citet{henziziegelgneiting21} is to date specialized on univariate response variables and conditional distributions, conformal predictive distributions from  \citet{vovketal2019} are based on regression problems. \citet{shen2024engressionextrapolationlensdistributional} estimates conditional distributions only.

\section{Experiments}\label{sec:experiments}
In the following, we present a series of numerical experiments on both simulated and real data to assess the flexibility and robustness of the KDM model across a range of applications. We fix the constant prior $p=1$ in the following.

% For computational efficiency, and owing to the heuristic in  XXX, we perform low-rank approximation only on  $\bm K_\Pa$ using uniform  random pivoting  using $m=\sqrt{n}\log{n}$ with slight modifications. According to Remark \ref{rem:nystrom}, this leaves all results unchanged, and significantly reduces computation times, as only the columns of $\bm K _{\Pa}$ need to be queried in order to construct the matrix $\bm R$, rather than the columns of the full matrix $\bm K$.  We investigate the validity of this approach in the introductory example below.

% \todo{Say here that error tolerance in Algorithm~\ref{algo0} is set to $\epsilon=10^{-4} \trace(\bm K)$. Unless we are going to change this to adaptive randomized pivoting in the following examples. To be decided..}
\subsection{Gaussian introductory example}\label{sec:introex}
As a first illustrative example, we consider two shifted Gaussian distributions. In particular, we let $\mathbb{P}\sim \mathcal{N}(0,1)$ and $\mathbb{Q}\sim \mathcal{N}(\mu,1)$, so that the target density takes the form
\[
g_{\star}(x)\coloneqq e^{-\frac{\mu^2}{2}+\mu x}.
\]
Our goal is twofold: first, to show that KDM is able to accurately recover $g_\star$ and second, to highlight the benefits of the Nystr\"om approximation introduced in Section~\ref{sec:lowrank} compared to the full model described in Section~\ref{sec:sampleestimator}. In line with the main theoretical result in Theorem~\ref{thmAEbound}, we seek to empirically verify that the compressed estimator in \eqref{hlambdaeqsampleLRcoor2} matches the performance of the full model in \eqref{eqRT}, while using only $m \ll n$ Nystr\"om points sampled uniformly from the $\mathbb{P}$-sample, and at the same time achieving substantial computational savings.

We set $\mu = 1/2$ and $n = 5000$, and repeat the experiment 100 times. We use a Gaussian kernel throughout this example. For simplicity, we fix the regularization parameter to $\lambda = n^{-1/2}$ and choose the kernel length scale as $\rho = \mathrm{median}\{\|z_{\mathbb{P},i} - z_{\mathbb{P},j}\|_2 : i < j\}/\sqrt{2}$.

The results are reported in Figure~\ref{fig:gausstest}. As expected, Figure~\ref{fig:gausslowrankerror} shows that, in this simple setting, the compressed KDM estimator in \eqref{hlambdaeqsampleLRcoor2} achieves performance comparable to the full model in \eqref{eqRT} using only $m\sim50$ Nystr\"om points. Furthermore, Figure~\ref{fig:gausscomputationtimes} highlights the significant reduction in computational time achieved by the compressed approach.

% To lead into the empirical exercises, we start with a simple experiment in which $\Zcal=\R$, and  
% \[
% g_{\star}(x)\coloneqq e^{-\frac{\mu ^2}{2}+\mu x},
% \]
% is the Radon-Nikodym derivative of a Gaussian with mean $\mu$ and unit variance with respect to a standard normal distribution. We set $\mu=1/2$, $n=5000$,  and simulate one hundred data sets to estimate KDM on each of them, as well as one hundred additional data sets of the same size to estimate loss function \eqref{eqPML2} using the known $g_{\star}$, and the estimated KDM. Each of the estimations is performed with uniform random pivoting with $m=1, 5, 10, 50, 100, 500, 1000, 2500, 4500, 5000$ following Lemma \ref{lemapplocal} using only the matrix $\bm K _{\Pa}$, and the full matrix $\bm K$ with $2n=10000$ that can be seen in the same picture with the approximated solutions. For all runs, we set $\lambda = n^{-\frac{1}{2}}$. The length scale of the Gaussian kernel
% \begin{equation}
% k_{gauss}(z,z')\coloneqq \e^{-\frac{\|z-z'\|^2_2}{2 \,\rho ^2}},\label{eq:gaussiankernel}
% \end{equation}
% is set to $\rho=\sqrt{median\{\|z_{\Pa,i}-z_{\Pa,j}\|_2^2:i<j\}/2}$.

\begin{figure}
\begin{subfigure}[t]{0.5\textwidth}
 \includegraphics[scale=0.35]{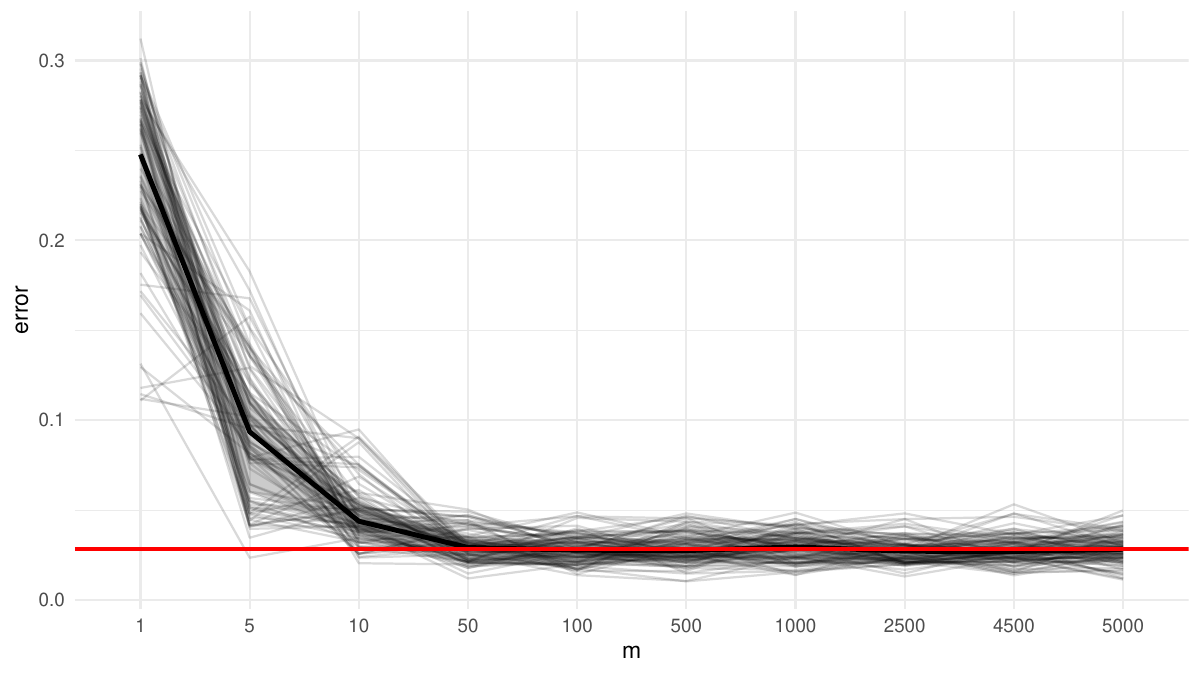}
 \caption{\label{fig:gausslowrankerror}Empirical error}
 \end{subfigure}
 \begin{subfigure}[t]{0.5\textwidth}
  \includegraphics[scale=0.35]{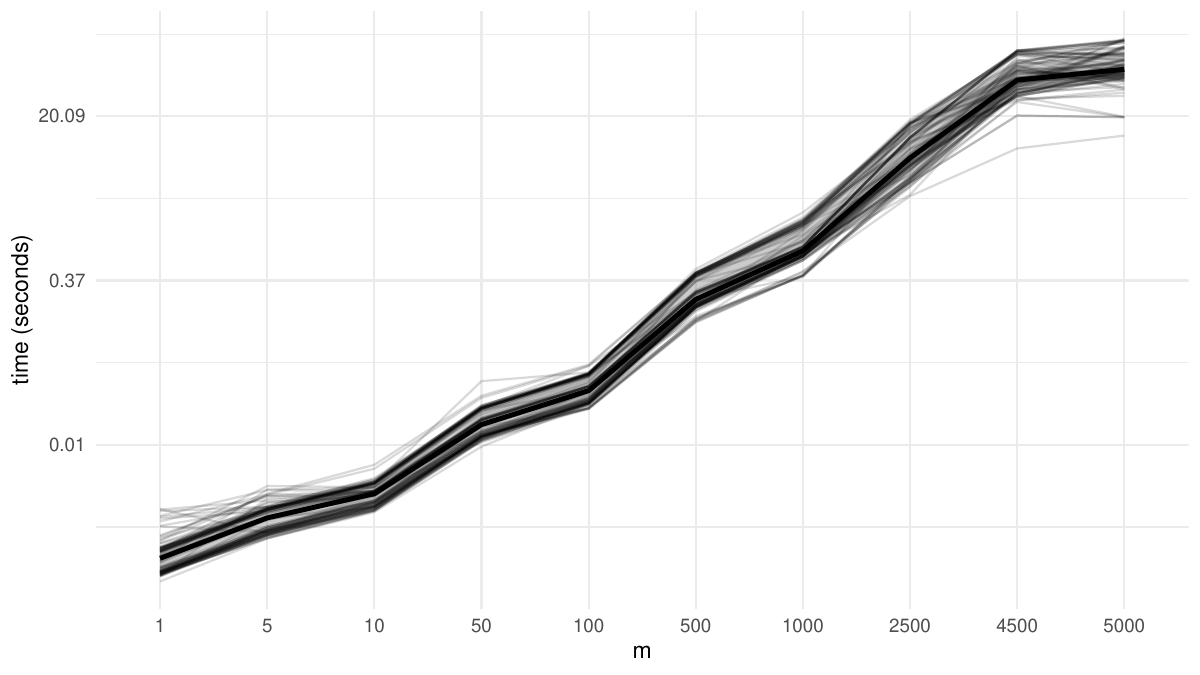}
  \caption{\label{fig:gausscomputationtimes}Runtime}
  \end{subfigure}
  \caption{\label{fig:gausstest} Left: empirical test error in \eqref{eqPML2} as a function of the number of sampled Nystr\"om points $m$ for the experiment described in Section \ref{sec:introex}. The black curve denotes the average over 100 repetitions, while the red line represents the full model in \eqref{eqRT}. Right: computational time as a function of $m$.
  % The figure shows loss \eqref{eqPML2} induced by estimating the density ratio $g_{\star}(x)\coloneqq e^{-\frac{\mu ^2}{2}+\mu x}$ of a Gaussian with mean $\mu$ with respect to a standard normal over one hundred simulations. Computation times in Panel \subref{fig:gausscomputationtimes} are in seconds. The red line in Panel \subref{fig:gausslowrankerror} is estimated using the full kernel matrix $\bm K$ of dimension $2n=10000$. The x-axis shows the number of data points sampled uniformly.
  }
\end{figure}
% Figure \ref{fig:gausstest}, Panel \ref{fig:gausslowrankerror} shows that the low-rank error induced by uniform random pivoting becomes negligible already from $m\geq 50$. At the same time, Panel \ref{fig:gausscomputationtimes} shows that computation times are well below one second for $m\leq 4500$, where we employ online spectral decomposition to obtain $\bm R$ as described in Section \ref{sec:lowrank} together with \eqref{hlambdaeqsampleLRcoor1}. For $m=5000, 10000$, we calculate \eqref{hlambdaeqsampleLRcoor2}.

% This smooth and simple introductory example thus well establishes the validity of estimator \eqref{hlambdaeqsampleLR} and Theorem \ref{thmAEbound}. The next experiment illustrates their working in the non-smooth case.

\subsection{Discrete target distribution}
As a second example, we evaluate the ability of the KDM model to recover the density in a discrete and highly non-linear setting. We consider a distribution supported on a discrete uniform $200 \times 200$ grid over $[-1.2,1.2]\times[-1.2,1.2]$. Figure \ref{fig:smiley} illustrates the density of the target distribution with respect to  the uniform distribution.

Notably, as an additional challenge, the target density is zero for most points in the support. In particular, only $3326$ out of $40000$ points (i.e., $8.3\%$) have non-zero density.

Since the Gaussian kernel used in the introductory example is not well suited to this highly non-smooth setting, we instead employ the compactly supported kernel
\begin{align}
 \quad k_{circular}(z,z')&\coloneqq \begin{cases} \frac{2}{\pi}\arccos \left (\frac{\|z-z'\|_2}{\theta} \right )- \frac{\|z-z'\|_2}{\theta}  \sqrt{1-\frac{\|z-z'\|_2^2}{\theta^2} }& \text{ if } \|z-z'\|<\theta \\
 0 & \text{otherwise}
\end{cases}\label{eq:circular}
\end{align}
introduced in \citet{genton02}. 

For $n=10^4$ and $n=10^5$, we sample points $z_{\Q,1},\ldots, z_{\Q,n}$ from the target distribution $\Q$, and $z_{\Pa,1},\ldots, z_{\Pa,n}$ from the uniform distribution $\Pa$.  For simplicity, we fix the regularization parameter to $\lambda = n^{-1/2}$ and choose the length scale as $\theta = \mathrm{median}\{\|z_{\Pa,i}-z_{\Pa,j}\|_2 : i<j\}$.

\begin{figure}
\centering

% \begin{subfigure}[t]{0.4\textwidth}
%   \centering
%   \includegraphics[width=\linewidth]{pix/smiley_laplace_10000}
%   \caption{$n=10^4$ with Laplace kernel \eqref{eq:laplacekernel}}
% \end{subfigure}%
% \begin{subfigure}[t]{0.4\textwidth}
%   \centering
%   \includegraphics[width=\linewidth]{pix/smiley_laplace_100000}
%   \caption{$n=10^5$ with Laplace kernel \eqref{eq:laplacekernel}}
% \end{subfigure}
\begin{subfigure}[t]{0.4\textwidth}
  \centering
\includegraphics[width=\linewidth]{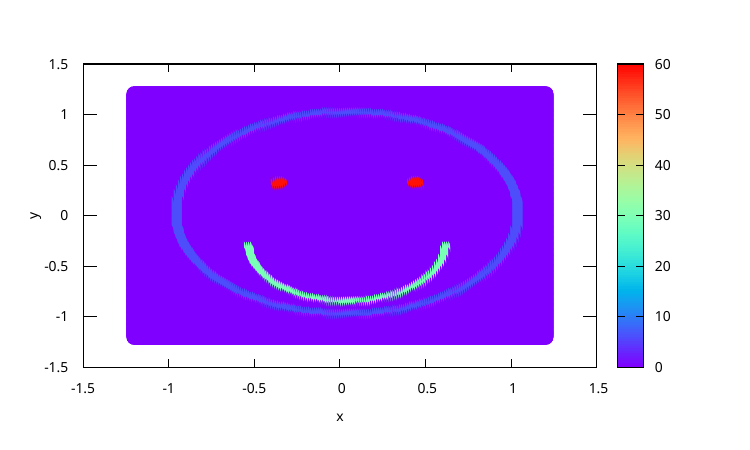}
\caption{\label{fig:smiley}True distribution}
\end{subfigure}\\
\begin{subfigure}[t]{0.4\textwidth}
  \centering
  \includegraphics[width=\linewidth]{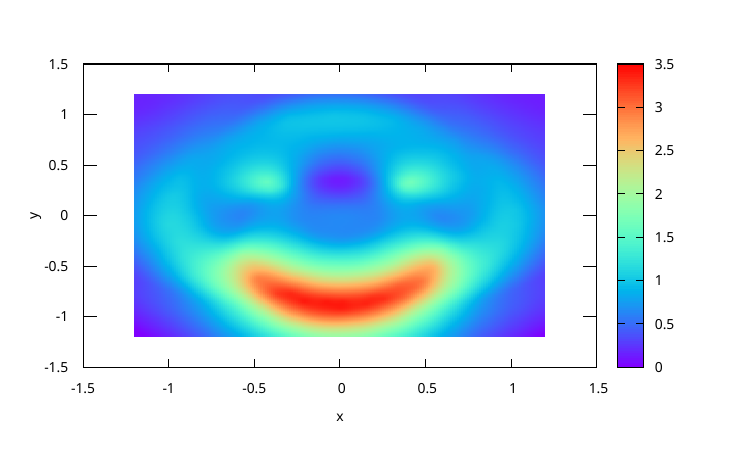}
  \caption{$n=10^4$ with circular kernel \eqref{eq:circular}}
\end{subfigure}%
\begin{subfigure}[t]{0.4\textwidth}
  \centering
  \includegraphics[width=\linewidth]{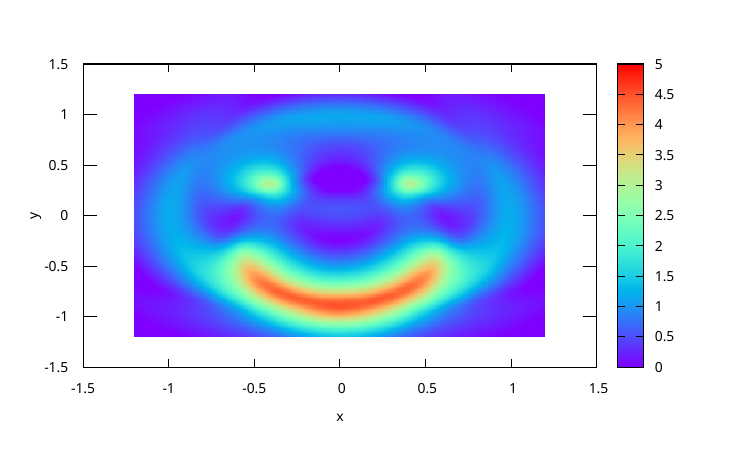}
  \caption{$n=10^5$ with circular kernel \eqref{eq:circular}}
\end{subfigure}

\caption{\label{fig:smileyest}
True and estimated \textit{smiley} distribution with KDM model. %Panel \subref{fig:smiley} shows discrete bivariate probabilities with respect to a bivariate uniform distribution on a $200\times 200$ states space on $[-1.2,1.2]\times[-1.2,1.2]$. The figure shows evaluations of the estimated KDM on the smiley grid of probability ratios. KDM are estimated with circular kernel \eqref{eq:circular}.
}
\end{figure}
Figure \ref{fig:smileyest} shows the estimated density obtained with our KDM model, compared to the ground truth. Qualitatively, the \textit{smiley} structure is already visible for $n=10^4$ and becomes more clearly defined for $n=10^5$.

\subsection{Empirical evaluation of two-sample testing}
\label{sec:emp-two-sample}
In this third example, following Section~\ref{sec:two-sample}, we use KDM to perform two-sample tests on a collection of distributions taken from \citet{zengxiatong18} and \citet{aisunzhu22}; see Appendix~\ref{sec:independencedistributions}. We expect to confirm independence for the \textit{Independent Clouds} distribution, while rejecting independence in the remaining dependent cases.
We employ a Gaussian kernel with length scale selected via the median heuristic, as in the previous sections, and set $\lambda = n^{-1/2}$.
\begin{table}
\begin{center}
\begin{footnotesize}
\begin{tabular}{c | cc | cc}
\hline
 &  \multicolumn{2}{c}{n=1000} & \multicolumn{2}{c}{n=5000} \\
model & HSIC & KDM & HSIC & KDM \\
\hline
IndependentClouds &  0.04 & 0.06 & 0.06 & 0.05 \\
W                 &  1.00 & 0.74 & 1.00 & 1.00 \\
Diamond           &  1.00 & 0.97 & 1.00 & 1.00 \\
Parabola          &  1.00 & 0.57 & 1.00 & 1.00 \\
TwoParabola       &  1.00 & 0.74 & 1.00 & 1.00 \\
Circle            &  1.00 & 1.00 & 1.00 & 1.00 \\
Variance          &  1.00 & 0.78 & 1.00 & 1.00 \\
Log               &  1.00 & 0.94 & 1.00 & 1.00 \\
\hline
\end{tabular}
\end{footnotesize}
\end{center}
 \caption{\label{tab:indep}Independence testing.
Rejection rates at the 5\% significance level for samples from \textit{IndependentClouds} (first row) and for dependent samples from various distributions taken from \citet{zengxiatong18}. The column ``HSIC'' reports the results of the test proposed in \citet{pfisteretal17}.}
 \end{table}
Table \ref{tab:indep} reports the rejection rates of the test statistic \eqref{eq:teststat} at the 5\% significance level, estimated over $1000$ datasets, for training sample sizes $n=1000$ and $n=5000$. The test relies on the Gamma approximation \ref{lthmnullh4} in Theorem~\ref{lthmnullh}. As expected, the performance of KDM improves with larger sample sizes, in line with the asymptotic validity of \eqref{eq:teststat}. For $n=5000$, KDM achieves performance comparable to the specialized HSIC test of \citet{pfisteretal17}. %(that we also use with Gamma approximation through the R package dHSIC).

\subsection{Conditional distribution estimation}
In the next two examples, we follow Section~\ref{sec:cond-distr-est} and we use KDM to estimate conditional distributions. First, we consider simulated samples generated from mixture distributions. Second, we estimate the conditional distribution of stock returns using real market data. For these tasks, we use a Gaussian kernel, with length scale and regularization parameter selected via cross validation.

\subsubsection{Simulation study from mixture models}\label{sec:simmixture}
We assess the conditional distributions and conditional expectations \eqref{approxCE3} through scoring rules as proposed by \citet{gneitingraftery07}, and confront KDM with the locally smoothed kernel density estimator \citep{rac_08}. 
We generate a mixture of $j \in \{1,2,3\}$ Gaussian components on $\mathbb{R}^2 \times \mathbb{R}^2$, with random correlation matrices \citep{ilyahensen21} and mean vectors drawn from $\Ucal(-0.2,0.2)$. Mixture weights are sampled from the probability simplex, and $3n$ i.i.d.\ samples $(x_i,y_i)$ are drawn from the resulting distribution.

% \begin{equation}
%      z_{\Pa,i} \coloneqq  (x_{2i-1},y_{2i}) ,\quad  z_{\Q,i} \coloneqq (x_{2n+i},y_{2n+i})  ,\quad i=1,\dots,n.
% \end{equation}

To generate the samples $\bm z_\Pa$ and $\bm z_\Q$, we follow the construction in \eqref{zPXY}. Given these samples, KDM is trained with the Gaussian kernel and 20-fold cross-validation. % with the validation loss in \eqref{eq:vallossfunction}.
The experiment is repeated 200 times.

As a benchmark, we estimate a nonparametric locally smoothed kernel density following \citet{liracine06}, using all $3n$ data points and the \texttt{np} package in \textsf{R}. The benchmark metric is based on the energy scoring rule, as implemented in the \textsf{R} package \href{https://github.com/FK83/scoringRules}{scoringRules}, for out-of-sample data $  x_1,\ldots,   x_L \in \R^2$ and $  y\in \R^2$ generated from the mixture distributions described above, 
\[
 ES_{\Mcal}(  y)\coloneqq \frac{1}{L}\sum _{l=1}^mw^{\Mcal}(  x_l,  y)\|  y-  x_l\|_2-\frac{1}{2L^2}\sum _{l,m=1}^mw^{\Mcal}(  x_l,  y)w^{\Mcal}(  x_m,  y)\|  x_l-  x_m\|_2,
\]
where the weights $w^{\Mcal}(  x,   y)$ are taken to be the conditional densities of $  y|  x$ evaluated for $\Mcal \in \{\text{KDM, \texttt{np}}\}$. We then consider the average
$
 \text{energy score differential }  \frac{1}{L}\sum _{i=1}^L (ES_{\texttt{np}}(  y_i)-ES_{\text{KDM}}(  y_i)),
$
 for $  y_1,\ldots,   y_L$ from the joint sample $(  x_1,  y_1),\ldots , (  x_L,  y_L)$.

Figure \ref{fig:energy} shows the distribution of the energy score differential over the 200 simulation runs, and for one, two, and three clusters in the Gaussian mixture for $n=1000$.
The mean energy score differential can be seen to be in favor of KDM. The empirical distributions are also pronouncedly skewed in favor of KDM. Note that the computational effort for bandwidth selection required by \texttt{np}, in particular in larger data sets and higher dimensions is substantial and by far exceeds KDM's requirements. We therefore refrain from considering such larger data sets.

\begin{figure}
% \begin{subfigure}[t]{0.5\textwidth}
%  \includegraphics[scale=0.85]{energy_300}
%  \caption{$n=100$}
%  \end{subfigure}
%  \begin{subfigure}[t]{0.5\textwidth}
\begin{center}
  \includegraphics[scale=0.75]{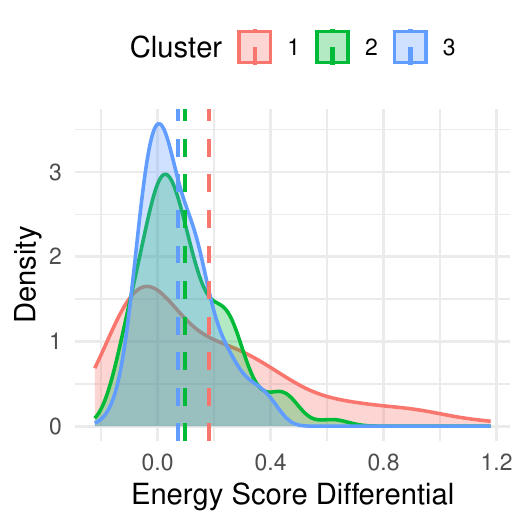}
  \end{center}
%   \caption{$n=1000$}
%   \end{subfigure}
  \caption{\label{fig:energy}Energy score differential. The figure shows the distribution of the difference between the energy score computed from KDM and the nonparametric kernel density estimation as described in \citet{liracine06}. %The data are generated from a mixture of normal distributions with one, two, or three clusters with $n=1000$. 
  Higher is better.}
\end{figure}
\subsubsection{Conditional distribution of stock returns}\label{sec:stockpred}
To showcase KDM with real and higher-dimensional data, we consider the joint distribution of eleven realized monthly stock portfolio returns $  Y_{t+1}$, of which five are from  \citet{FAMA20151}, four from \citet{houxuezhang14},  one from \citet{hekellymanela17}, and one is a momentum factor return, along  with eleven predictive variables $  X_t$: book-to-market  (BM),  net-equity-expansion (ntis), inflation growth  (infl), stock variance (svar), dividend yield (DP), default yield (DFY), term spread (TMS),  civilian unemployment rate (UNRATE), consumption growth (CONSGR), and the Chicago Fed National Activity Index (CFNAI)  from Jan 1963 to Dec 2022 (720 months). Both, the returns, and the conditioning covariates, are indexed by time. From these data, we have $\Zcal=\R^{d}\times \R^{d}$,  with  $d=11$, and thus total dimension adding up to 22. This is too high-dimensional for the locally smoothed nonparametric kernel density estimation from \citet{liracine06}, and we use instead a Gaussian distribution on $\Zcal$ as a benchmark, whose moments are estimated from sample averages, from which we compute conditional moments. We use monthly expanding training windows starting in Jan 1963, with lengths ranging from 200 to 719 months, each followed by one test month (the first test month is Sep 1979, the last is Dec 2022). For KDM, kernel parameters are selected via $8$-fold cross-validation. 

We furthermore use the statistical scoring rule \(\mathcal{S}: \mathbb{R}^d \times \mathbb{R}^d \times \mathbb{S}^{d}_{++} \to \mathbb{R}\) proposed by \citet{dawidsebastiani99},
\begin{equation*} 
\mathcal{S}( {x},  {\mu}, \bm{\Sigma}) \coloneqq \log \det \bm{\Sigma} + ( {x} -  {\mu})^{\top} \bm{\Sigma}^{-1} ( {x} -  {\mu}),
\end{equation*}
in particular the score differential
\begin{equation}\label{eqRcaltT}
 \Scal_{t,T,\text{OOS}} \coloneqq \frac{1}{T - t} \sum_{s=t}^{T-1} \big(\Scal(  Y_{s+1},   \mu _{  Y_{s+1}|  X _s}^{\text{Gauss}}, \bm \Sigma _{  Y_{s+1}|  X _s}^{\text{Gauss}}) - \Scal(  Y_{s+1},  \mu _{  Y_{s+1}|  X _s}^{\text{KDM}}, \bm \Sigma _{  Y_{s+1}|  X _s}^{\text{KDM}})\big),
\end{equation}
where $\bm \Sigma _{  Y_{s+1}|  X _s}^{\text{KDM}}$ and $\bm \Sigma _{  Y_{s+1}|  X _s}^{\text{Gauss}}$ are conditional covariance matrices computed from KDM and a conditional Gaussian distribution estimated from sample averages.

% Figure \ref{fig:stocks1} shows the expanding out-of-sample $R^{2}_{t,T,\text{OOS}}$ and $R^{2,2}_{t,T,\text{OOS}}$ over time $T$ ranging from first test month ($t+1$ = Sep 1979) to last (Dec 2022). While KDM exhibits higher predictability against sample averages, it fares particularly well in the 1987 crash  as well as the COVID-19 pandemic against the conditional Gaussian distribution for both the polynomial, and the Gaussian kernel.

Figure \ref{fig:stocks2} plots the logarithm of the expanding out-of-sample score differential $\Scal_{t,T,\text{OOS}}$ over time $T$, which remains uniformly positive. This indicates that KDM yields a consistent improvement in predictive distributions for financial data relative to standard benchmarks, such as the conditional Gaussian model. Overall, KDM demonstrates strong empirical performance on both simulated and real data.

% \begin{figure}
% %\begin{center}
% \begin{subfigure}[t]{0.49\textwidth}
%   \includegraphics[scale=0.42]{pix/gaussBLr2}
%   \caption{\label{fig:r2gauss}OOS $R^2$ (Gauss kernel)}
% \end{subfigure} 
% \begin{subfigure}[t]{0.49\textwidth}
%   \includegraphics[scale=0.42]{pix/polynomialr2alt}
%   \caption{\label{fig:r2poly}OOS $R^2$ (polynomial kernel)}
% \end{subfigure}
% %\end{center}
%  \caption{\label{fig:stocks1}This figure shows out-of-sample $R^2$ from \eqref{eq_kellyoos} and \eqref{eq_kellysecmomoos} calculated for the Gaussian kernel in Panel \subref{fig:r2gauss}, and the polynomial kernel in Panel \subref{fig:r2poly}. Higher is better. The data are eleven monthly US stock returns, and eleven conditioning variables from 1963 up to 2022.}
% \end{figure}

% \begin{figure}
% \begin{subfigure}[t]{0.5\textwidth}
%   \includegraphics[scale=0.45]{polynomialr2}
%   \caption{OOS $R^2$ (polynomial kernel)}
%   \end{subfigure}
%  \begin{subfigure}[t]{0.5\textwidth}
%   \includegraphics[scale=0.45]{gaussr2}
%   \caption{OOS $R^2$ (Gauss kernel)}
%   \end{subfigure} 
%   \caption{\label{fig:stocks1}This figure shows out-of-sample $R^2$ from \eqref{eq_kellyoos} and \eqref{eq_kellysecmomoos} calculated for using the polynomial kernel in Panel \subref{fig:r2poly}, and the Gaussian kernel in Panel \subref{fig:r2gauss}. Higher is better. The data are eleven monthly US stock returns, and eleven conditioning variables from 1963 up to 2022.}
% \end{figure}

\begin{figure} 
\begin{center}
 \includegraphics[scale=0.42]{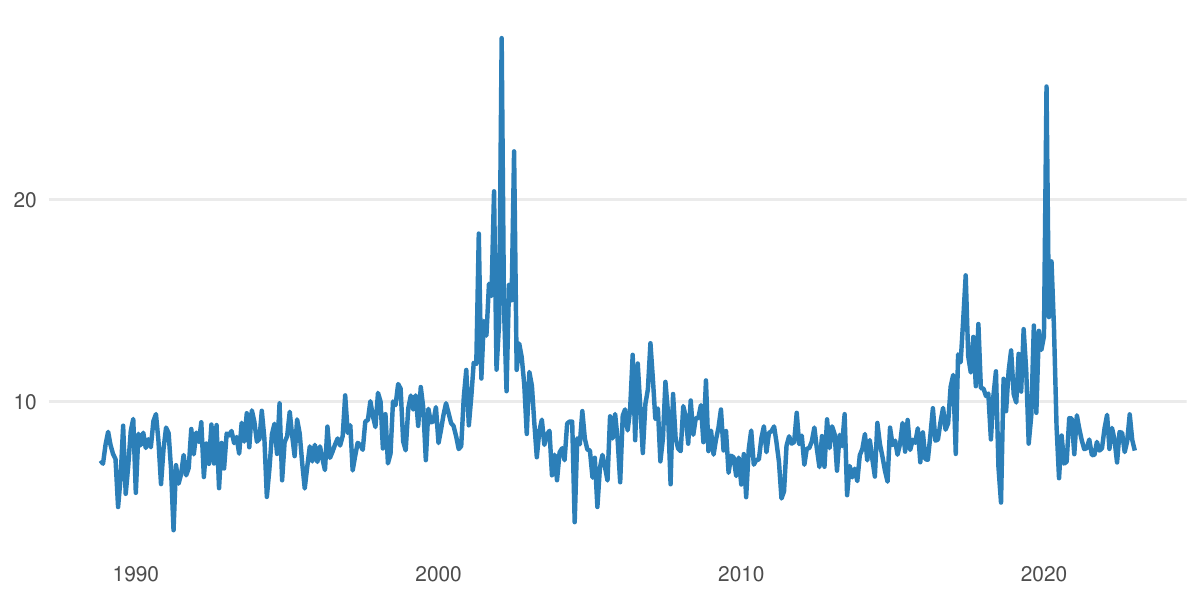} 
\end{center}
  \caption{\label{fig:stocks2}This figure shows the logarithm of the score differential \eqref{eqRcaltT}. Higher is better. The data are eleven monthly US stock returns, and eleven conditioning variables from 1963 up to 2022.}
\end{figure}

% \begin{figure}
% \begin{subfigure}[t]{0.5\textwidth}
%   \includegraphics[scale=0.45]{polynomialr2}
%   \caption{\label{fig:r2poly}OOS $R^2$ (polynomial kernel)}
%   \end{subfigure}
%  \begin{subfigure}[t]{0.5\textwidth}
%   \includegraphics[scale=0.45]{gaussr2}
%   \caption{\label{fig:r2gauss}OOS $R^2$ (Gauss kernel)}
%   \end{subfigure} \\
% %   \begin{subfigure}[t]{0.5\textwidth}
% %  \includegraphics[scale=0.45]{gamma_expr2}
% %  \caption{OOS $R^2$ (Gamma-exponential kernel)}
% %  \end{subfigure}
% \begin{center}
%   \begin{subfigure}[t]{0.5\textwidth}
%  \includegraphics[scale=0.45]{logscoring} 
%  \caption{\label{fig:logscoring}Log excess scoring rule}
%  \end{subfigure}
%   \end{center}
%   \caption{\label{fig:stocks}Predictive scoring. The upper panels show out-of-sample $R^2$ from \eqref{eq_kellyoos} and \eqref{eq_kellysecmomoos} calculated for using the polynomial kernel in Panel \subref{fig:r2poly}, and the Gaussian kernel in Panel \subref{fig:r2gauss}. Panel \subref{fig:logscoring} shows the logarithm of the excess scoring rule \eqref{eqRcaltT}. For any of the panels, more is better. The data are eleven monthly US stock returns, and eleven conditioning variables from 1963 up to 2022.}
% \end{figure}

\section{Conclusion}\label{sec:conclusion}
% \todo{revise according to empirical experiments}
Kernel density machines (KDM) is a comprehensive data-driven framework for estimating the Radon--Nikodym derivative (density) $\frac{d\mathbb Q}{d\mathbb P}$ from i.i.d.\ samples of $\mathbb P$ and $\mathbb Q$ in a reproducing kernel Hilbert space. KDM is computable in particular with large data sets, and allows both finite-sample and asymptotic inference. Accordingly, we illustrated its use within hypothesis testing and conditional distribution estimation. Given the reliance of learning problems on the data-generating probability law and the law induced by a model, KDM can be used in many scenarios ranging from transfer learning problems, and generative models to causal inference.

% We test 
% The estimated Radon--Nikodym derivative lends itself to many applications, such as nonparametric conditional distribution estimation, and  nonparametric tests for independence. 
% When the density is specified with respect to a distribution that can be sampled from, it can also be used for generative models, to generate images and videos that have never been seen before.

%%%%%%%%%%%%%%%%%%%%%%%%%%%%%%%%%%%%%%%%%%%%%%%%%%%%%%%%%%
%% Bibliography
%%%%%%%%%%%%%%%%%%%%%%%%%%%%%%%%%%%%%%%%%%%%%%%%%%%%%%%%%%
% % \clearpage
\begin{singlespacing}
\bibliographystyle{ecta}
\bibliography{Bibliography}
\end{singlespacing}
% \bibliography{Bibliography}

%\begin{appendix}

\appendix

\section{Proofs}\label{appproofs}
This appendix collects all proofs.

\subsection{Proof of Lemma~\ref{lemJast0Jast}}
\label{app:proof_lemma2}
For any such $f$ we have $f\trueg\in L^2_{\Pa}$, $f\in L^2_{\Q}$, and
\[   J_\Pa^\ast (f\trueg)=\int_{\Zcal}
   k(\cdot,z) f(z)\trueg(z)\, \Pa(d{z})=\int_{\Zcal}
  k(\cdot,z)f(z)\, \Q(d{z})=J_\Q^\ast f  ,\]
   as desired.

\subsection{Proof of Lemma \ref{lemregerror0}}
\label{app:proof_lemma3}
\ref{lemregerror00}: This follows from  the definition of the projection.

\ref{lemregerror01}: In view of Lemma~\ref{lemJast0Jast}, and by the projection property, we can rewrite \eqref{hlambdaeqpre} as
\begin{equation}\label{hlambdaeqgstar}
 h_\lambda = ( J_\Pa^\ast J_\Pa + \lambda)^{-1} J_\Pa^\ast  (  g_\star -  {p} )=( J_\Pa^\ast J_\Pa + \lambda)^{-1} J_\Pa^\ast ( g_0 - {p} ).
\end{equation}
Convergence now follows as in the proof of  \citet[Lemma 2.3]{bou_fil_22}.

\ref{lemregerror02}: This statement is elementary. 

\ref{lemregerror03}: By assumption we can rewrite \eqref{hlambdaeqgstar} as $h_\lambda =  ( J_\Pa^\ast J_\Pa + \lambda)^{-1} (J_\Pa^\ast J_\Pa)^{1+{{\nu}}} \eta_{{\nu}}$, for  $\eta_{{\nu}}\coloneq (J_\Pa^\ast J_\Pa)^{-{\nu}}h_0$. Using the notation and setup as in \citet[Appendix B.5 and B.6]{bou_fil_22}, we can expand $h_\lambda$ as 
\[h_\lambda = \sum_{i\in I} \frac{\mu_i^{1+{{\nu}}}}{\mu_i+\lambda} \langle \eta_{{\nu}},u_i\rangle_\Hcal u_i,\]  
and $(J_\Pa^\ast J_\Pa)^{-{{\nu}}}h_\lambda = \sum_{i\in I} \frac{\mu_i}{\mu_i+\lambda} \langle \eta_{{\nu}},u_i\rangle_\Hcal u_i$, and $J_\Pa h_\lambda =   \sum_{i\in I}  \mu_i^{1/2}\frac{\mu_i^{1+{{\nu}}}}{\mu_i+\lambda} \langle \eta_{{\nu}},u_i\rangle_\Hcal v_i$, where $\{u_i\}_{i\in I}$ is an orthonormal system in $\Hcal$ of eigenfunctions of $J_\Pa^\ast J_\Pa$ with eigenvalues $\mu_i>0$, for a countable index set $I$. The functions $v_i=\mu_i^{-1/2} J_\Pa u_i$, in turn, form an orthonormal system in $L^2_{\Pa}$ of eigenfunctions of $J_\Pa J_\Pa^\ast$ with the same eigenvalues $\mu_i$. Properties \eqref{hb1s} and \eqref{hb2s} now  follow from dominated convergence. The convergence rates \eqref{hb3s} and \eqref{hb4s} follow from
\begin{align*}
\| h_\lambda-h_0\|_\Hcal^2 & = \sum_{i\in I}   \Big(\frac{\lambda \mu_i^{{\nu}}}{\mu_i+\lambda}\Big)^2 \langle \eta_{{\nu}},u_i\rangle_\Hcal^2  =\lambda^{2{{\nu}}} \sum_{i\in I}   \Big(\frac{\lambda^{1-{{\nu}}} \mu_i^{{\nu}}}{\mu_i+\lambda}\Big)^2 \langle \eta_{{\nu}},u_i\rangle_\Hcal^2\\
& \le \lambda^{2{{\nu}}} (1-{{\nu}})^2 \|\eta_{{\nu}}\|_\Hcal^2
\end{align*}
and 
\begin{align*} 
\| J_\Pa h_0 -J_\Pa h_\lambda \|_{L^2_{\Pa}}^2 &=  \sum_{i\in I}   \Big(\frac{\lambda \mu_i^{1/2+{{\nu}}}}{\mu_i+\lambda}\Big)^2 \langle \eta_{{\nu}},u_i\rangle_\Hcal^2  =\lambda^{1+2{{\nu}}} \sum_{i\in I}   \Big(\frac{\lambda^{1/2-{{\nu}}} \mu_i^{1/2+{{\nu}}}}{\mu_i+\lambda}\Big)^2 \langle \eta_{{\nu}},u_i\rangle_\Hcal^2 \\
&\le \lambda^{1+2{{\nu}}} (1/2+{{\nu}})^2 \|\eta_{{\nu}}\|_\Hcal^2 ,
\end{align*}
where we used Young's inequality $\lambda^{1-t}\mu_i^t \le (1-t)\lambda + t \mu_i$ for $t\in [0,1]$.

\subsection{Proof of Theorem~\ref{thmAC}}
\label{app:proof_thm4}
The proof of Theorem~\ref{thmAC} relies on the following key lemma. 
\begin{lemma}\label{lemAC} 
\begin{enumerate}
 
  \item\label{lemAC1} The $\Hcal$-valued sample variable defined by 
\begin{equation}\label{Deltadef}
    {u_\lambda}\coloneqq n^{-1}(S_\Q^\ast \bm 1    -  S_\Pa^\ast ( \bm {p} + S_\Pa h_\lambda)) - (  J_\Q^\ast 1     -    J_\Pa^\ast ( {p} +   J_\Pa h_\lambda))
\end{equation} 
has mean zero and ${u_\lambda}\to 0$ in $\Hcal$ a.s.\ as $n\to\infty$.

\item\label{lemAC2} $n^{1/2}{u_\lambda} \to \Ncal(0,Q_\lambda)$ in distribution as $n\to\infty$, where the covariance operator $Q_\lambda$ is given in \eqref{Clamdef}.

\item\label{lemAC4} $(n^{-1} S_\Pa^\ast S_\Pa+\lambda)^{-1} \to (J_\Pa^\ast J_\Pa+\lambda)^{-1}$ in operator norm a.s.\ as $n\to\infty$.
\end{enumerate}
\end{lemma}

\begin{proof}
\ref{lemAC1}: We can write ${u_\lambda}=n^{-1}\sum_{i=1}^n \xi_i$, where
 \begin{equation} \label{eqxii}
    \xi_i \coloneqq  k(\cdot, z_{\Q,i})  - ( {p}(z_{\Pa,i})+ h_\lambda(z_{\Pa,i})) k(\cdot,z_{\Pa,i})  - (  J_\Q^\ast 1     -    J_\Pa^\ast ({p} +   J_\Pa h_\lambda))
\end{equation}
are i.i.d.\ $\Hcal$-valued random variables with zero mean under the sampling measure, say, $\bm P\coloneqq(\Pa\otimes\Q)^{\otimes \infty}$. 
In view of \eqref{asskappa}, we obtain the bounds on the operator norms
\begin{equation}\label{Jopbounds}
    \|J_\Pa\|,\, \|J_\Pa^\ast\|,\, \|J_\Q \|,\, \|J_\Q^\ast\|\le  \kappa .
\end{equation}
Using \eqref{asspi}, we obtain that the random variables
\begin{equation}\label{xiCGb}
 \begin{aligned}
    \|\xi_i\|_\Hcal &\le \|k(\cdot, z_{\Q,i})\|_\Hcal  + | {p}(z_{\Pa,i})| \|k(\cdot, z_{\Pa,i})\|_\Hcal + |\langle h_\lambda,k(\cdot,z_{\Pa,i})\rangle_\Hcal| \| k(\cdot,z_{\Pa,i})\|_\Hcal \\
    &\quad + \| J_\Q^\ast\|      +   \| J_\Pa^\ast\| \|p\|_\infty +  \| J_\Pa^\ast\| \|J_\Pa\| \|h_\lambda\|_\Hcal\\
    &\le 2(  \kappa +\|p\|_\infty  \kappa +\|h_\lambda\|_\Hcal  \kappa^2) \eqqcolon c_\xi
\end{aligned}
\end{equation}
are uniformly bounded. The claim follows from the law of large numbers, $n^{-1}\sum_{i=1}^n \xi_i\to 0$ a.s., see \citet[Theorem 2.1]{hof_pis_76}.

\ref{lemAC2}: Using the above, the functional CLT applies such that $n^{-1/2}\sum_{i=1}^n \xi_i \to \Ncal(0,Q_\lambda)$ in distribution, see \citet[Theorem 3.6]{hof_pis_76}. The covariance operator $Q_\lambda$ is given by its action on test functions $f,g\in\Hcal$,
\begin{align*}
    \langle Q_\lambda f, g\rangle_\Hcal &= \E_{\bm P}[ \langle \xi_i,f\rangle_\Hcal \langle \xi_i,g\rangle_\Hcal] = \E_{\bm P}[ \langle \xi_{\Q,i},f\rangle_\Hcal \langle \xi_{\Q,i},g\rangle_\Hcal] + \E_{\bm P}[ \langle \xi_{\Pa,i},f\rangle_\Hcal \langle \xi_{\Pa,i},g\rangle_\Hcal]\\
    &= \langle J_\Q f, J_\Q g\rangle_{L^2_{\Q}} - \langle  f, J_\Q^\ast 1\rangle_\Hcal \langle  g, J_\Q^\ast 1\rangle_\Hcal+ \big\langle ({p} + J_\Pa  h_\lambda) J_\Pa   f , ({p} + J_\Pa  h_\lambda)J_\Pa   g  \big\rangle_{L^2_{\Pa }}   \\
    &\quad -   \langle f ,J_\Pa  ^\ast ({p} + J_\Pa  h_\lambda) \rangle_\Hcal \langle g, J_\Pa  ^\ast ({p} + J_\Pa  h_\lambda) \rangle_\Hcal\\
    &= \langle J_\Q^\ast J_\Q f, g\rangle_\Hcal - \big\langle \langle f ,J_\Q^\ast 1\rangle_\Hcal J_\Q^\ast 1, g\big\rangle_\Hcal \\
    &\quad +\big\langle J_\Pa ^\ast  \diag({p} + J_\Pa  h_\lambda)^2  J_\Pa   f  , g\rangle_\Hcal - \big\langle \langle f ,J_\Pa  ^\ast ({p} + J_\Pa  h_\lambda) \rangle_\Hcal J_\Pa  ^\ast ({p} + J_\Pa  h_\lambda), g\big\rangle_\Hcal .
\end{align*}
where we decompose $\xi_i=\xi_{\Q,i}-\xi_{\Pa,i}$ into the components $\xi_{\Q,i}\coloneqq k(\cdot, z_{\Q,i})-J_\Q^\ast 1$ and $\xi_{\Pa,i}\coloneqq( {p}(z_{\Pa,i})+ h_\lambda(z_{\Pa,i}))k(\cdot,z_{\Pa,i})-J_\Pa^\ast ({p} +   J_\Pa h_\lambda)$, which have mean zero and are independent under the sampling measure $\bm P$. This proves \eqref{Clamdef}.

{\ref{lemAC4}}: This follows as in \citet[Lemma B.2]{bou_fil_22}.
\end{proof}

We can now prove Theorem~\ref{thmAC}. We define $  {C_\lambda} =  J_\Pa^\ast J_\Pa + \lambda $ and $ b =  J_\Q^\ast 1-J_\Pa^\ast {p}  $ and the sample analogues $\hat C_\lambda =n^{-1} S_\Pa^\ast S_\Pa + \lambda $ and $\hat b =n^{-1}(S_\Q^\ast \bm 1-S_\Pa^\ast \bm {p}) $. Using \eqref{hlambdaeqsample} and \eqref{hlambdaeqNS}, we then decompose
\begin{equation}\label{hhhatdecX}
\begin{aligned}
    \hat h_\lambda - h_\lambda &=  \hat C_\lambda^{-1} \hat b - C_\lambda^{-1} b  = \hat C_\lambda^{-1}(\hat b - b) - (C_\lambda^{-1}-\hat C_\lambda^{-1}) b \\
    & = \hat C_\lambda^{-1}(\hat b - b) -\hat C_\lambda^{-1}(\hat C_\lambda - C_\lambda) C_\lambda^{-1}b = \hat C_\lambda^{-1} \big( \hat b - b - (\hat C_\lambda - C_\lambda) h_\lambda\big)=\hat C_\lambda^{-1} {u_\lambda},
    \end{aligned}
\end{equation}
where ${u_\lambda}$ is defined in \eqref{Deltadef}. 
Part~\ref{thmAC1} now follows from Lemma~\ref{lemAC}\ref{lemAC1} and \ref{lemAC4}. Part~\ref{thmAC2} follows from Lemma~\ref{lemAC}\ref{lemAC2} and \ref{lemAC4} and Slutsky's theorem. This completes the proof of Theorem~\ref{thmAC}.

We also provide the following auxiliary finite-sample guarantees, which hold without assuming well-posedness \eqref{asswellposed} nor source condition~\eqref{sourcecond}, and will be used in the proof of Theorem~\ref{thmAEbound}.\footnote{In fact, combining Proposition~\ref{propFSGaux} with source condition~\eqref{sourcecond} and Lemma~\ref{lemregerror0} gives the same error rates for $\hat h_\lambda$ as for the low-rank approximation $\hat h_{\lambda,\Pi}$ in Theorem~\ref{thmAEbound} but without assuming any lower bound on $n$ as in \eqref{lambdancondn}.}  Comparing \eqref{FSGeqH} and \eqref{FSGeqL2} shows that embedding the sample difference $\hat h_\lambda - h_\lambda$ into $L^2_\Pa$ improves the error rate by a factor of $\lambda^{1/2}$.

\begin{proposition}\label{propFSGaux}
Finite-sample guarantees: let $\eta\in (0,1)$.
\begin{enumerate}
 \item\label{propFSGaux1} With sampling probability of at least $1-\eta$, it holds
\begin{equation}
        \| \hat h_\lambda - h_\lambda\|_\Hcal \le  C_{FS}(\eta,\|h_\lambda\|_\Hcal)\,  \lambda^{-1} n^{-1/2},\label{FSGeqH}
      \end{equation}
 for the coefficient 
\begin{equation}
    C_{FS}(\eta, s)\coloneqq 2\kappa\sqrt{2\log(2/\eta)}(  1+\|p\|_\infty +s  \kappa ). \label{CSdef}
\end{equation}

 \item\label{propFSGaux2} With sampling probability of at least $1-2\eta$, the bounds  \eqref{FSGeqH} and 
\begin{equation}
     \| { J_\Pa}\hat h_\lambda - { J_\Pa} h_\lambda\|_{{ L^2_\Pa}} \le  C_{FS}({\eta},\|h_\lambda\|_\Hcal)\,  { C_J}({\eta},\lambda n)\,\lambda^{{ -1/2}} n^{-1/2},\label{FSGeqL2}
      \end{equation}
  hold jointly, for the coefficient 
\begin{equation}
    C_J(\eta,s)\coloneq (1+\kappa^2 s^{-1})\eta^{-1/2} .\label{CJdef}
\end{equation} 
\end{enumerate}
\end{proposition}

\begin{proof} 
First, consider the $\Hcal$-valued sample variable $u_\lambda$ given in \eqref{Deltadef}. Using the bounds derived in the proof of Lemma~\ref{lemAC}, Hoeffding's inequality \citep{pin_94} %\citet[Theorem 3.5]{pin_94}\todo[inline]{check and possibly replace this reference by something better} 
applies, which bounds the tail probabilities, $\bm P \big[\|{u_\lambda}\|_\Hcal > \tau\big]\le   2 \e^{-\frac{\tau^2   n}{ 2 c_\xi^2}}$ for any $\tau\ge 0$, for $c_\xi$ given in \eqref{xiCGb}. This proves that
\begin{equation}\label{FSulambda}
     \text{$\|{u_\lambda}\|_\Hcal \le  C_{FS}(\eta,\|h_\lambda\|_\Hcal)  n^{-1/2}$, with probability $1-\eta$.}
 \end{equation}

Then, considering the decomposition~\eqref{hhhatdecX} as in the proof of Theorem~\ref{thmAC}, part~\ref{propFSGaux1} of the proposition follows from \eqref{FSulambda} and the bound on the operator norm $\|\hat C_\lambda^{-1}\|\le\lambda^{-1}$. 
    
For part~\ref{propFSGaux2}, we use \cite[Lemma 7.1]{bac_24}, which is based on \cite[Lemma 5]{mou_ros_22}, which shows that
\begin{equation}\label{bdJPahatA}
\E_{\bm P}[ \|J_\Pa \hat C_\lambda^{-1}  \|^2]\le \lambda^{-1}( 1 + \kappa^2 (\lambda n)^{-1})^2.
\end{equation}
Indeed, for any $h\in\Hcal$, we have $\|J_\Pa \hat C_\lambda^{-1} h \|_{L_\Pa^2}^2 =    \langle   h, \hat C_\lambda^{-1} J_\Pa^\ast J_\Pa \hat C_\lambda^{-1} h\rangle_{L^2_\Pa}$, which is exactly the expression in \cite[LHS of (7.25)]{bac_24}. \\
The claim follows as $\|J_\Pa \hat C_\lambda^{-1}  \|^2 =\sup_{\|h\|_\Hcal\le 1} \|J_\Pa \hat C_\lambda^{-1} h \|_{L_\Pa^2}^2 $, and \cite[RHS of (7.25)]{bac_24} is bounded by the RHS of \eqref{bdJPahatA} times $\|h\|_\Hcal$, which proves~\eqref{bdJPahatA}. From \eqref{bdJPahatA}, using Markov's inequality, it follows that with sample probability of at least $1-\eta$, we have
\begin{align*}
\|J_\Pa \hat C_\lambda^{-1}  \|\le C_J(\eta,\lambda n) \lambda^{-1/2} ,
\end{align*}
for the coefficient $C_J(\eta,s)$ defined in \eqref{CJdef}. Combining this with \eqref{FSulambda}, and using the elementary fact that $\bm P[\cap_i E_i] \ge 1-\sum_i \bm P[E_i^c]$ for any events $E_i$, we obtain that \eqref{FSGeqH} and \eqref{FSGeqL2} hold jointly with sample probability $1-2\eta$.
\end{proof}

\subsection{Proof of Lemma~\ref{lemRT}}
\label{app:proof_lemma5}
 Define the positive operator $A\coloneq S_\Pa^\ast S_\Pa + n\lambda$, so that we can write \eqref{hlambdaeqsample} as
 \begin{equation}\label{eqproofhath}
     \hat h_\lambda = A^{-1} S^\ast \begin{bmatrix}
-\bm p\\
\bm 1
\end{bmatrix}.
 \end{equation}  
Then $S A S^\ast$ is positive semidefinite and $\ker (SAS^\ast)=\ker S^\ast$. Indeed, let $u\in\ker (SAS^\ast)$. Then $0=u^\top (SAS^\ast)u  = \langle A S^\ast u, S^\ast u\rangle_\Hcal$, and therefore $S^\ast u=0$. Hence $\Ima(SAS^\ast)=\Ima S$, and as $\bm B \bm B^+$ is the orthogonal projection onto $\Ima \bm B$ for any matrix $\bm B$, we obtain
 \[ (SAS^\ast) (SAS^\ast)^+ S S^\ast = S S^\ast.\]
 As $\Ima (A S^\ast)\subseteq \Ima S^\ast$, this identity implies $AS^\ast  (SAS^\ast)^+ S S^\ast =   S^\ast$. Left-multiplying the latter equation by $A^{-1}$ gives $A^{-1}S^\ast = S^\ast  (SAS^\ast)^+ S S^\ast$, which in view of \eqref{eqproofhath} yields \eqref{eqRT}. This completes the proof of Lemma~\ref{lemRT}.

\subsection{Proof of Lemma \ref{lemapplocal}}
 \label{app:proof_lemma6}
We define 
\begin{equation}\label{Vdef}
    V\coloneq k(\cdot,\bm z_\Pi^\top)\bm R : \R^\ell\to \Hcal
\end{equation}
with adjoint given by $V^\ast h=\bm R^\top \langle k(\bm z_\Pi,\cdot),h\rangle_\Hcal$. In view of \eqref{R1}, it satisfies $V V^\ast =P_\Pi$, and 
\begin{equation}\label{VastVeq}
    V^\ast V = \bm R^\top \bm K_{\Pi,\Pi}\bm R=\bm R^\top (\bm R\bm R^\top)^+\bm R=\bm R^+\bm R=\bm I_\ell
\end{equation}
where we used that $\rank\bm R=\ell$. Therefore, the operator identity holds
\begin{equation}\label{Bdefeq}
    A\coloneq (  P_\Pi  S_\Pa^\ast S_\Pa  P_\Pi  + n\lambda)^{-1}    P_\Pi =  {V} ( {V^\ast}     S_\Pa^\ast S_\Pa  {V} +  n\lambda)^{-1} {V^\ast}  .
\end{equation} 
By \eqref{hlambdaeqsampleLR}, we have
${\hat h_{\lambda,\Pi}} = A (S^\ast_\Q \bm 1 - S_\Pa^\ast \bm p)$.
Using the identities $S_\M V = \bm L_\M$ and $V^\ast S_\M^\ast = \bm L_\M^\top$ then yields \eqref{hlambdaeqsampleLRcoor1}.

For the second equality, we define $\bm B\coloneq \bm K_{\Pi,\Pa}\bm K_{\Pa,\Pi} + \lambda n \bm K_{\Pi,\Pi}$, and use the fact that $\Ima\bm K_{\Pi,\Pi}=\Ima\bm K_{\Pi,:}$, as $\bm K$ is symmetric positive semidefinite. It follows that $\Ima \bm B = \Ima(\bm K_{\Pi,\Pa}\bm K_{\Pa,\Pi}) + \Ima\bm  K_{\Pi,\Pi}  =\Ima\bm  K_{\Pi,\Pi}=\Ima(\bm R\bm R^\top)=\Ima\bm R$. Hence there exists a $u\in\R^m$ such that $\bm R u = 
\bm K_{\Pi,\Q}\bm 1 -\bm K_{\Pi,\Pa}\bm p $. Using this and the above identities, we can re-express the right hand side of \eqref{hlambdaeqsampleLRcoor1} as
\[ k(\cdot,\bm z_\Pi^\top)\bm R
\big(\bm R^\top \bm B\bm R\big)^{-1}\bm R^\top \bm R u.\]
We claim that 
\begin{equation}\label{eqproofR}
   \bm R
\big(\bm R^\top \bm B\bm R\big)^{-1}\bm R^\top \bm R = \bm B^+ \bm R. 
\end{equation}
Indeed, as the image of both sides of \eqref{eqproofR} lies in $\Ima\bm R=\Ima \bm B$, we can left-multiply both sides by $\bm R^\top\bm B$, which leads to the equivalent trivial identity $\bm R^\top\bm R= \bm R^\top\bm R$. Identity \eqref{eqproofR} yields \eqref{hlambdaeqsampleLRcoor2}, which completes the proof of Lemma~\ref{lemapplocal}.

\subsection{Proof of Theorem \ref{thmAEbound}}
\label{app:proof_thm}
The proof is based on selected arguments in the proof of \cite[Theorem 2]{rud_cam_ros_15}. Mimicking their notation, we denote $C\coloneq J_\Pa^\ast  J_\Pa$ and $C_\lambda\coloneq C+\lambda I_\Hcal$ and their sample analogues $\hat C \coloneq n^{-1} S_\Pa^\ast  S_\Pa$, $\hat C_\lambda\coloneq \hat C + \lambda I_\Hcal$, which equals $n^{-1} A$ for the operator $A$ in the proof of Lemma~\ref{lemRT}. The source condition \eqref{asswellposed} then reads as 
\begin{equation}\label{boundR}
\text{$\| C_\lambda^{-\nu} h_\lambda\|_\Hcal  \le \| C^{-\nu} h_\lambda\|_\Hcal \le R$ and $\|h_\lambda\|_\Hcal\le \|C^\nu\| \|C^{-\nu} h_\lambda\|\le \kappa^{2\nu} R$, }
\end{equation} 
for all $\lambda\ge 0$, as shown in Lemma~\ref{lemregerror0}\ref{lemregerror03}.

We further define $V$ as in \eqref{Vdef}, and $G\coloneq n A$ for $A$ defined in \eqref{Bdefeq}, which we can write as
\[  G ={V} ( {V^\ast} \hat C_\lambda {V} )^{-1} {V^\ast}  .\]
We then have $\hat h_\lambda = \hat C_\lambda^{-1} n^{-1} (S^\ast_\Q\bm 1 - S_\Pa^\ast\bm {p})$ and ${\hat h_{\lambda,\Pi}} = G \hat C_\lambda\hat h_\lambda$. Accordingly, we decompose\footnote{Decomposition \eqref{tilhhG}, and consequently the rest of this proof, is markedly different from the decomposition used in the proof of \cite[Theorem 2]{rud_cam_ros_15}, as we cannot express $S^\ast_\Q\bm 1 - S_\Pa^\ast\bm {p}\neq S_\Pa^\ast J_\Pa h_0$ directly in terms of $h_0$ as they do.}
\begin{equation}\label{tilhhG}
{\hat h_{\lambda,\Pi}} - h_\lambda = G\hat C_\lambda\hat h_\lambda - h_\lambda = (G\hat C_\lambda - I)h_\lambda + G \hat C_\lambda (\hat h_\lambda -h_\lambda).\end{equation}

Using that $G\hat C_\lambda {V} = {V} ( {V^\ast} \hat C_\lambda {V} )^{-1} {V^\ast} \hat C_\lambda {V}= {V}$, we further expand the first term in~\eqref{tilhhG},
\begin{align*}
G\hat C_\lambda - I&= G\hat C_\lambda {V} {V^\ast}  + G\hat C_\lambda (I-{V} {V^\ast})- I \\
&= {V} {V^\ast} - I + G\hat C_\lambda (I-{V} {V^\ast}).
\end{align*}
Taking operator norms, and using that $I-{V} {V^\ast}  $ is an orthogonal projection in $\Hcal$, gives
\begin{align*}
\| (G\hat C_\lambda - I)h_\lambda\|_\Hcal &\le \| ({V} {V^\ast} - I) C_\lambda^\nu \| \| C_\lambda^{-\nu} h_\lambda\|_\Hcal \\
&\quad + \| \hat C_\lambda^{-1/2}\| \|  \hat C_\lambda^{1/2} G \hat C_\lambda^{1/2}\| \| \hat C_\lambda^{1/2} C_\lambda^{-1/2}\| \| C_\lambda^{1/2}(I-{V} {V^\ast})C_\lambda^\nu \| \| C_\lambda^{-\nu} h_\lambda\|_\Hcal \\
&\le  R (1 + \lambda^{-1/2} \theta \Ccal_{\lambda,\Pi}^{1/2} ) \Ccal_{\lambda,\Pi}^\nu ,
\end{align*}
for $\theta\coloneq \| \hat C_\lambda^{1/2} C_\lambda^{-1/2}\|$, and where we used that
\[ \|  \hat C_\lambda^{1/2} G \hat C_\lambda^{1/2}\| \le 1,\]
see \cite[Lemma 8]{rud_cam_ros_15}, and
\[ \| ({V} {V^\ast} - I) C_\lambda^\nu\|\le \Ccal_{\lambda,\Pi}^\nu,\quad \| C_\lambda^{1/2}(I-{V} {V^\ast})C_\lambda^\nu \|\le \Ccal_{\lambda,\Pi}^{1/2+\nu},\]
see bounds on ``B.1'' in the proof of \cite[Theorem 2]{rud_cam_ros_15}, for the computation error $\Ccal_{\lambda,\Pi}= \| C_\lambda^{1/2}(I-{V} {V^\ast})\|^2$ defined in \eqref{defCE}.

Further expanding the second term in \eqref{tilhhG} and
taking norms gives
\begin{align*}
\| G \hat C_\lambda (\hat h_\lambda -h_\lambda)\|_\Hcal&\le  \|\hat C_\lambda^{-1/2}\| \|\hat C_\lambda^{1/2} G \hat C_\lambda^{1/2}\| \| \hat C_\lambda^{1/2} C_\lambda^{-1/2}\| \|C_\lambda^{1/2}(\hat h_\lambda -h_\lambda)\|_\Hcal\le \lambda^{-1/2} \theta \Delta
\end{align*}
for $\Delta \coloneq \|C_\lambda^{1/2}(\hat h_\lambda -h_\lambda)\|_\Hcal$, and we will use that 
\begin{align*}
 \Delta^2 &= \langle \hat h_\lambda -h_\lambda, C_\lambda (\hat h_\lambda -h_\lambda)\rangle_\Hcal  = \| C^{1/2}(\hat h_\lambda -h_\lambda)\|_\Hcal^2 + \lambda \| \hat h_\lambda -h_\lambda\|_\Hcal^2.
\end{align*}
Summarizing we obtain,  
\begin{equation}\label{sum1new}
\| {\hat h_{\lambda,\Pi}} - h_\lambda\|_\Hcal \le  R \big(1 + \lambda^{-1/2} \theta \Ccal_{\lambda,\Pi}^{1/2} \big)\Ccal_{\lambda,\Pi}^\nu   + \lambda^{-1/2}\theta \Delta.
\end{equation}
%According to \cite{rud_cam_ros_15} we can expect that for appropriate randomized pivoting scheme with high probability $\Ccal_{\lambda,\Pi}=O(\lambda)$, $\theta=O(1)$ and $\Delta=O(\lambda^{-1/2} n^{-1/2})$, so that the first term in \eqref{sum1new} is $O(\lambda^\nu)$ and the second term is $O(\lambda^{-1} n^{-1/2})$. That is, for $\lambda_n=n^{-1/(2\nu+2)}$, the sum becomes $O(n^{-\nu/(2\nu + 2)})$. The maximal rate is achieved for $\nu=1/2$, in which case $\nu/(2\nu + 2) = 1/6$.

As seen in Proposition~\ref{propFSGaux}, embedding in $L^2_\Pa$ gives better rates, as\footnote{This follows from the elementary identity $\| J_\Pa h\|_{L^2_\Pa}^2 = \langle h, J_\Pa^\ast J_\Pa h\rangle_\Hcal$ for $h\in\Hcal$.} $\| J_\Pa {\hat h_{\lambda,\Pi}} - J_\Pa h_\lambda\|_{L^2_\Pa}=  \| C^{1/2} ( {\hat h_{\lambda,\Pi}} -   h_\lambda)\|_\Hcal$ and 
\begin{align*}
& \| C^{1/2} (G\hat C_\lambda - I)h_\lambda\|_\Hcal \le \| C^{1/2} C_\lambda^{-1/2}\|  \| C_\lambda^{1/2} ({V} {V^\ast} - I) C_\lambda^\nu \| \| C_\lambda^{-\nu} h_\lambda\|_\Hcal \\
&\quad + \| C^{1/2} C_\lambda^{-1/2}\| \| C_\lambda^{1/2} \hat C_\lambda^{-1/2}\|  \|  \hat C_\lambda^{1/2} G \hat C_\lambda^{1/2}\| \| \hat C_\lambda^{1/2} C_\lambda^{-1/2}\| \| C_\lambda^{1/2}(I-{V} {V^\ast})C_\lambda^\nu \| \| C_\lambda^{-\nu} h_\lambda\|_\Hcal \\
&\le R (1 + \beta \theta  ) \Ccal_{\lambda,\Pi}^{1/2+\nu}  ,
\end{align*}
for $\beta\coloneq \| C_\lambda^{1/2} \hat C_\lambda^{-1/2}\|$ and where we used that $ \| C^{1/2} C_\lambda^{-1/2}\|\le 1$. Similarly, 
\begin{align*}
&\| C^{1/2} G \hat C_\lambda (\hat h_\lambda -h_\lambda)\|_\Hcal\\
&\le  \| C^{1/2} C_\lambda^{-1/2}\| \| C_\lambda^{1/2} \hat C_\lambda^{-1/2}\|  \|\hat C_\lambda^{1/2} G \hat C_\lambda^{1/2}\| \| \hat C_\lambda^{1/2} C_\lambda^{-1/2}\| \|C_\lambda^{1/2}(\hat h_\lambda -h_\lambda)\|_\Hcal\le \beta \theta \Delta.
\end{align*}
Summarizing we obtain,  
\begin{equation}\label{sum2new}
  \| J_\Pa {\hat h_{\lambda,\Pi}} - J_\Pa h_\lambda\|_{L^2_\Pa} \le R (1 + \beta \theta  ) \Ccal_{\lambda,\Pi}^{1/2+\nu}    + \beta \theta \Delta.
\end{equation}

For $\beta$ and $\theta$ we argue as in the proof of \cite[Proposition 2]{rud_cam_ros_15}. Indeed, by \cite[Proposition 7]{rud_cam_ros_15}, it holds $\beta^2\le 1/(1-{ b}(\lambda))$ and $\theta^2\le 1+b(\lambda)$, where $b(\lambda)\coloneq \|C_\lambda^{-1/2}(\hat C - C)C_\lambda^{-1/2}\|$.\footnote{In the proof of \cite[Proposition 2]{rud_cam_ros_15} they wrongly bound $\beta$ by $1/(1-b(\lambda))$, without adjusting for the square. However, as the resulting bound is larger than 1, the  bound on ``$\beta\le 1.5$'' in \cite{rud_cam_ros_15} is correct, but too conservative.} By \cite[Proposition 8]{rud_cam_ros_15}, it holds that $b(\lambda)\le \frac{2(\kappa^2 +\lambda) w}{3\lambda n} + \sqrt{\frac{2\kappa^2 w}{\lambda n}}$ for $w\coloneq \log\frac{8\kappa^2	}{\delta \lambda}$ with sampling probability $1-\delta$, if $\lambda\le \|C\|$.\footnote{We obtain $8\kappa^2$ in the numerator of $w=\log(\cdot)$, as the stated bound in \cite[Proposition 8]{rud_cam_ros_15} holds with probability $1-2\delta$.} Elementary calculations now show that $b(\lambda)\le 1/3$ if $\lambda n \ge  13 \kappa^2 w$.\footnote{We obtain this by consideration of the equivalent quadratic inequality $x^2 - 3\sqrt{c_1} x + 3c_2\ge 0$, for $x=\sqrt{\lambda n}$, $c_1=2\kappa^2 w$, $c_2=2(\kappa^2+\lambda)w/2$. The discriminant is $D=9 c_1-12 c_2= 18\kappa^2 w - 8 (\kappa^2+\lambda)w\ge 10\kappa^2 w$. Hence the larger solution $x_2$ to the quadratic equation ``$=$'' is lower bounded by $x_2\ge (3\sqrt{2\kappa^2 w}+ \sqrt{10\kappa^2 w})/2\ge \sqrt{13\kappa^2 w}$.}
In this case, $\beta\le \sqrt{3/2}<1.23$, $\theta\le \sqrt{4/3} <1.16$, and $\beta\theta\le \sqrt{2}<1.42$.

On the other hand, from Proposition~\ref{propFSGaux}\ref{propFSGaux2},  it follows that 
\[ \Delta \le C_{FS}(\eta,\kappa^2  R)\big( 1 +C_J(\eta,\lambda n) \big)  \lambda^{-1/2} n^{-1/2}\]
with sample probability $1-2\eta$, where we used \eqref{boundR} and that $C_{FS}(\eta,s)$ is increasing in $s$, for the coefficients $C_{FS}$ and $C_J$ given in \eqref{CSdef} and \eqref{CJdef}.

Combining the above with \eqref{sum1new} and \eqref{sum2new} we obtain, with probability $1-\eta$ that the following bounds hold jointly\footnote{We use that $\lambda^{-1/2}\theta\Ccal_{\lambda,\Pi}^{1/2}\le \sqrt{4/3}\sqrt{3} = 2$.} 
\begin{equation}\label{AEboundH}
\begin{aligned}  
\|  {\hat h_{\lambda,\Pi}}- h_\lambda \|_\Hcal &\le     3^{1+\nu}  R \lambda^\nu   + 1.16\, C_{FS}(\eta/6,\kappa^2 R)\big( 1 +C_J(\eta/6,40\kappa^2) \big)  \lambda^{-1} n^{-1/2}
  \end{aligned}
\end{equation}
and 
\begin{equation}\label{AEboundJ}
\begin{aligned}
  \| J_\Pa {\hat h_{\lambda,\Pi}} - J_\Pa h_\lambda\|_{L^2_\Pa} & \le(1 + \sqrt{2} ) 3^{1/2+\nu}   R \lambda^{1/2+\nu} \\
  &\quad + \sqrt{2}  C_{FS}(\eta/6,\kappa^2 R)\big( 1 +C_J(\eta/6,40\kappa^2) \big)  \lambda^{-1/2} n^{-1/2}
  \end{aligned}
\end{equation}
if 
\begin{equation}\label{lambdancond}
\lambda n \ge 13 \kappa^2 \log\frac{24\kappa^2	}{\eta \lambda}
\end{equation}
and $m\ge C_{CE}(\eta/3,\lambda,n)$. We also used that, by assumption $\lambda\le \|C\|\le \kappa^2$, we have $\lambda n \ge 13 \kappa^2 \log\frac{24 	}{\eta }> 40 \kappa^2$ and $C_J(\eta,s)$ is decreasing in $s$. In fact, $C_J(\eta/6,40\kappa^2)\le  (1+1/40) (\eta/6)^{-1/2}< 0.07 \eta^{-1/2}$.\\
Setting $\lambda = \| C\| n^{-1/(2+2\nu)}$, so that $\lambda n = \|C\| n^{(1+2\nu)/(2+2\nu)}$, shows that condition~\eqref{lambdancond} boils down to \eqref{lambdancondn}, and $m\ge C_{CE}(\eta/3,\lambda,n)$ reads as in \eqref{eqmLB}, see \eqref{rateCCE}. Moreover, the right hand sides in \eqref{AEboundH} and \eqref{AEboundJ} are bounded by $C_{AE,\Hcal}(\eta,\nu)n^{-\nu/(2+2\nu)}$ and $C_{AE,L^2_\Pa}(\eta,\nu)n^{-(1+ 2\nu)/(4+4\nu)}$ for the coefficients given in \eqref{defCAEH} and \eqref{defCAEL}, respectively.\footnote{We use the numerical bound of $(1+\sqrt{2}) \sqrt{3} <4.19$.} We then decompose $\|  {\hat h_{\lambda,\Pi}}- h_0 \|_\Hcal\le \|  {\hat h_{\lambda,\Pi}}- h_\lambda \|_\Hcal+\|    h_\lambda- h_0\|_\Hcal$ and combine with \eqref{hb3s} to obtain \eqref{eqHrates}. For the total error rates in \eqref{totalrates} we note that by orthogonality \eqref{SQAE} we have $\Ecal({\hat h_{\lambda,\Pi}}) -  \Ecal(h_0) = \| J_\Pa \hat h_{\lambda,\Pi} -J_\Pa h_0\|_{L^2_\Pa}^2$. We then decompose $\| J_\Pa \hat h_{\lambda,\Pi} -J_\Pa h_0\|_{L^2_\Pa}\le \| J_\Pa \hat h_{\lambda,\Pi} -J_\Pa h_\lambda\|_{L^2_\Pa}+\| J_\Pa h_\lambda-J_\Pa h_0\|_{L^2_\Pa}$ and combine with \eqref{hb4s} to obtain \eqref{totalrates}. This completes the proof of Theorem \ref{thmAEbound}.

\subsection{Proof of Theorem \ref{lthmnullh}} 
\label{app:proof_thm9}
That \eqref{nullh} implies \eqref{nullh1} follows from \eqref{hlambdaeqpre}. Moreover, it trivially also implies well-posedness \eqref{asswellposed} for $h_0=0$, and thus $R=0$ in \eqref{boundR}.

\ref{lthmnullh1}: This now follows from \eqref{AEboundH}, \eqref{AEboundJ}, and \eqref{lambdancond}, and because \eqref{hlambdaeqsampleLRcoor1} and \eqref{VastVeq} imply that the norms $\|{\hat h_{\lambda,\Pi}} \|_\Hcal$ and $\|J_\Pa{\hat h_{\lambda,\Pi}} \|_{L^2_\Pa}$ are given by the left hand sides of \eqref{eqtestnullnew} and \eqref{eqtestnullJP}, respectively. Here we approximate the $L^2_\Pa$-norm in \eqref{AEboundJ} asymptotically for large $n$ by the empirical norm $n^{-1/2}\|\cdot \|_2$ induced by the sample $\bm z_\Pa$ and by replacing $J_\Pa$ by $S_\Pa$. Note that $n^{-1/2}$ cancels on both sides of \eqref{AEboundJ}.

\ref{lthmnullh2}: Hypothesis \eqref{nullh} implies $J_\Q^\ast 1=J_\Pa^\ast {p}$, by Lemma~\ref{lemJast0Jast}. Hence Lemma~\ref{lemAC}~\ref{lemAC2} implies that, asymptotically for large $n$,
\begin{equation}\label{ulamhyp}
    u_\lambda = n^{-1/2}\big(S_\Q^\ast \bm 1 - S_\Pa^\ast \bm {p} \big)\sim\Ncal(0,Q_\lambda)
\end{equation}     
is normally distributed, where the covariance operator \eqref{Clamdef} simplifies to 
\[ Q_\lambda = J_\Q^\ast J_\Q - (J_\Q^\ast 1)\otimes (J_\Q^\ast 1)  + J_\Pa^\ast \diag({p})^2 J_\Pa - (J_\Pa^\ast {p})\otimes (J_\Pa^\ast {p}).\]  
We estimate $Q_\lambda$ by its sample analogue given by
\[ \hat Q_\lambda = n^{-1} S_\Q^\ast S_\Q - (n^{-1}S_\Q^\ast \bm 1)\otimes (n^{-1} S_\Q^\ast \bm 1)  + n^{-1}S_\Pa^\ast \diag(\bm {p})^2 S_\Pa - (n^{-1}S_\Pa^\ast \bm {p})\otimes (n^{-1} S_\Pa^\ast \bm {p}).\] 
To reduce the dimension, we project the sample variable $u_\lambda\mapsto   P_\Pi u_\lambda \eqqcolon V  v_\lambda$ on the subspace $\Hcal_\Pi$, with coordinate vector $v_\lambda= V^\ast u_\lambda$ in $\R^\ell$, see \eqref{R1} and \eqref{Vdef}. Using the identities in the proof of Lemma~\ref{lemapplocal}, we obtain that $v_\lambda$ is given by \eqref{vlamN} and normally distributed with mean zero and $\ell\times \ell$-covariance matrix \eqref{SigN}.

\ref{lthmnullh3}: This follows directly from \ref{lthmnullh2}.

\ref{lthmnullh4}: From \ref{lthmnullh2} and \ref{lthmnullh3}, we have that $\bm A v_{\lambda}\sim N(\bm 0, \bm W)$, from which we obtain the mean
$\mathbb E [S_\Gamma ]=\trace \bm W$ and variance $\mathbb V[S_\Gamma]= 2 \trace (\bm W^2)$. Matching moments of the approximating Gamma distribution $\Gamma(\alpha,\beta)$ gives shape $\alpha=\mathbb E [S_\Gamma ]^2/\mathbb V [S_\Gamma]$ and scale $\beta=\mathbb V [S_\Gamma]/\mathbb E [S_\Gamma ]$, as claimed.

\subsection{Proof of Lemma \ref{lemassu1XY}}
\label{app:proof_lemma11}
Denote by $f_X(x)=\int_\Ycal f(x,y) \mu_\Ycal( d  y)$ and $f_Y(x)=\int_\Xcal f(x,y) \mu_\Xcal( d  x)$ the marginal densities, such that $\Pa_X( d  x)=f_X(x)\mu_\Xcal( d  x)$ and $\Pa_Y( d  y)=f_Y(y)\mu_\Ycal( d  y)$. Define their positivity sets $S_X\coloneqq\{ f_X>0\}$ and $S_Y\coloneqq \{f_Y>0\}$. We claim that $\Pa_{(X,Y)}[S_X\times S_Y]=1$. Indeed, let $B$ be any measurable set contained in the complement $\Zcal\setminus (S_X\times S_Y)$. Then 
\begin{align*}
\Pa_{(X,Y)}[B] &\le \Pa_{(X,Y)}[B\cap (S_X^c \times \Ycal)] + \Pa_{(X,Y)}[B\cap (\Xcal \times S_Y^c)] \\
& \le \Pa_{(X,Y)}[ S_X^c \times \Ycal] + \Pa_{(X,Y)}[\Xcal \times S_Y^c] =\Pa_X[S_X^c]+\Pa_Y[S_Y^c]=0,
\end{align*}
which proves the claim. Hence we can replace $f$ in \eqref{eqlemass} by $f 1_{S_X\times S_Y}$, where $1_B$ denotes the indicator function of a set $B$. By definition we have $1_{S_X\times S_Y}(x,y)=0$ for any $(x,y)\in\Zcal$ such that $f_X(x) f_Y(y)=0$. As on the other hand we have $(\Pa_X\otimes \Pa_Y)( d  x, d  y) = f_X(x) f_Y(y)(\mu_X\otimes \mu_Y)( d  x, d  y)$, this proves \eqref{assu1XY}.

\section{Independence test distributions}\label{sec:independencedistributions} 
This appendix contains the distributions for the independence tests in Section~\ref{sec:emp-two-sample}.
\begin{itemize}
 \item \textbf{Independent clouds}: \[X = X_0 + \varepsilon_X, Y = Y_0 + \varepsilon_Y, \]where  $X_0$ and  $Y_0$  take values $\{-1, 1\}$ with probability 1/2, and  $\varepsilon_X$  and  $\varepsilon_Y$ are i.i.d.\ standard normal. 
 \item \textbf{W}: \[ X \sim \text{Unif}(-1,1), Y = C(X^2 - 0.5)^2 + \varepsilon, \qquad \varepsilon \sim  \Ucal (0,1). \] 
 \item \textbf{Diamond}: \[U_1 = U \cos \frac{\pi}{4} + V \sin \frac{\pi}{4}, 
V_1 = -U \cos \frac{\pi}{4} + V \sin \frac{\pi}{4},\]
 where $U$, $V \sim \Ucal(-1, 1)$ are independent uniform random variables. Let $(X, Y) = (U_1, V_1)$ if $\varepsilon < C$, and $(X, Y) = (U_2, V_2)$ otherwise, where $U_2$, $V_2 \sim \Ucal(-1, 1)$ and $\varepsilon \sim \Ucal(0, 1)$ are all i.i.d.\ random variables.
 \item \textbf{Parabola}: \[X \sim \Ucal(-1,1), 
Y = CX^2 + \varepsilon,\qquad \varepsilon \sim \Ucal(0,1).\]
\item \textbf{Two parabola}: \[X\sim \Ucal(-1, 1), \, 
 Y = (CX^2+\varepsilon)V,\]
 where $V$ takes values $\{-1, 1\}$ with probabilities $1/2$, and  $\varepsilon \sim \Ucal(0,1)$.
 \item \textbf{Circle}: \[X = C \sin(2\pi U) + \varepsilon_1, \, 
Y = 4.2 \cos(2\pi U) + \varepsilon_2,\]
where $U \sim \Ucal(-1, 1)$, $\varepsilon_1$ and $\varepsilon_2$ are i.i.d.\ standard normal.
\item \textbf{Variance}: \[Y = \varepsilon \sqrt{CX^2 + 1},\]
where $X$ and $\varepsilon$ are i.i.d.\ standard standard normal.
\item \textbf{Log}: \[Y = C \log X^2  + \varepsilon, \]
where $X$ and $\varepsilon$ are i.i.d.\ standard standard normal.
\end{itemize}

\end{document}